\documentclass[10pt]{article} 
\usepackage[preprint]{tmlr}

\usepackage{mathtools,amssymb,amsthm}
\usepackage{thm-restate}
\usepackage{slashed}

\usepackage{xcolor}
\usepackage{tikz}
\usetikzlibrary{babel, arrows, decorations, decorations.pathmorphing,
  decorations.pathreplacing, decorations.shapes, calc, arrows.meta,
  positioning, decorations.markings, fit, shapes.geometric, backgrounds}

\usepackage{microtype}
\usepackage{url}
\usepackage{hyperref}

\newtheorem{theorem}{Theorem}

\newtheorem{corollary}{Corollary}
\newtheorem{proposition}{Proposition}

\newtheorem{definition}{Definition}
\newtheorem{example}{Example}

\newtheorem{assumption}{Assumption}

\usepackage{slashed}
\newcommand{\ind}{{\perp\!\!\!\perp}}

\newcommand{\notind}{\slashed{\ind}}
\newcommand{\V}{\mathcal{V}}
\newcommand{\F}{\mathcal{F}}
\newcommand{\X}{\mathbf{X}}
\newcommand{\E}{\mathcal{E}}
\newcommand{\Y}{\mathbf{Y}}
\newcommand{\Z}{\mathbf{Z}}
\newcommand{\W}{\mathbf{W}}
\newcommand{\G}{{\mathcal{G}}}
\newcommand{\M}{{\mathcal{M}}}
\newcommand{\C}{\mathcal{C}}
\newcommand{\U}{\mathcal{U}}
\newcommand{\undirectedarrow}{\!\mathrel{\raisebox{0.51ex}{\rule{.9em}{0.14ex}}}\!}

\DeclareMathOperator{\Do}{do}
\DeclareMathOperator{\Forb}{Forb}
\DeclareMathOperator{\Ch}{Ch}
\DeclareMathOperator{\Pa}{Pa}
\DeclareMathOperator{\Anc}{Anc}
\DeclareMathOperator{\Desc}{Desc}

\definecolor{coolorange}{HTML}{F28E00}

\newcommand{\dashleftrightarrow}{%
  \mathrel{\tikz[baseline=-0.5874ex]{
        \draw[
          <->,
          dash pattern=on 1.25pt off 1.25pt,
          dash phase=1.4pt
        ] (0,0) -- (.925em,0);
  }}%
}

\title{Causal Effect Identifiability in Causal Abstractions: A Strict Hierarchy of Criteria}

\author{\name Clément Yvernes \email clement.yvernes@univ-grenoble-alpes.fr \\
      \addr Univ Grenoble Alpes, CNRS, Grenoble INP, LIG\\
      \AND
      \name Marianne Clausel \email marianne.clausel@univ-lorraine.fr \\
      \addr Université de Lorraine, CNRS, CRAN, 54000 Nancy\\
      \AND
      \name Emilie Devijver \email emilie.devijver@univ-grenoble-alpes.fr\\
      \addr Univ Grenoble Alpes, CNRS, Grenoble INP, LIG\\
      \AND Eric Gaussier \email eric.gaussier@imag.fr \\
      \addr Univ Grenoble Alpes, CNRS, Grenoble INP, LIG
      }

\def\month{MM}  
\def\year{YYYY} 
\def\openreview{\url{https://openreview.net/forum?id=XXXX}} 

\begin{document}

\maketitle

\begin{abstract}
    Identifying causal effects from observational data typically requires a fully specified causal graph, an assumption that is often unrealistic in complex or high-dimensional settings. Causal abstractions address this limitation by retaining only partial causal information, thereby defining equivalence classes of compatible causal graphs. In this paper, we study causal effect identifiability directly at the level of these graph collections, rather than at the level of individual causal diagrams. We formalize several notions of identifiability in this setting and organize them into a hierarchy that clarifies their relationships and differences. In particular, we establish that a causal effect may be identified over a collection of graphs without admitting a single causal derivation that holds uniformly across all compatible graphs. These results help reveal gaps and clarify ambiguities in the existing literature.
\end{abstract}

\section{Introduction}\label{sec:intro}

Knowing the effect of a treatment $\mathbf{X}$ on an outcome $\mathbf{Y}$, represented by the interventional distribution $P(\mathbf{Y} \mid do(\mathbf{X}))$, is central to many applications. Yet, performing interventions is often impractical due to ethical concerns, potential harm, or prohibitive costs. In such cases, causal effect identification aims to determine whether interventional quantities can be expressed solely in terms of observational data \citep{pearl_causality_2009}. This identifiability problem is typically addressed by deriving graphical criteria on causal diagrams and corresponding do-free formulas. However, these approaches generally assume that the underlying causal diagram is fully specified, requiring prior knowledge of the relationships between all observed variables. In practice, such knowledge is rarely available, especially in complex or high-dimensional settings, which substantially limits the applicability of standard causal inference methods.

A more relaxed assumption than having access to a fully specified causal graph is to rely on a causal abstraction, a representation that captures partial causal information about the underlying system. Rather than fixing a single graph, such abstractions induce classes of compatible causal diagrams and allow reasoning at a higher level of description. However, each compatible causal diagram may induce different answers to the same interventional query. As a result, identifiability is no longer a unique notion, but depends on how one aggregates or quantifies validity across the collection of graphs. While several such notions have been proposed, they are usually developed independently and lack a common framework for comparison.

Causal abstractions have been studied from multiple complementary perspectives. On the one hand, several works focus on their theoretical properties, including compositionality \citep{pmlr-v161-rischel21a}, the computation of optimal abstractions \citep{zennaro2022computingoptimalabstractionstructural}, the use of macro-variables or grouped variables as substitutes for micro-level variables \citep{PARVIAINEN2017110, DBLP:journals/corr/abs-1812-03789}, and the quantification of information loss relative to the underlying causal model \citep{10.24963/ijcai.2023/638}. On the other hand, causal abstractions have also been used as a tool in a variety of downstream settings, including causal discovery \citep{WahlNinadRunge+2024}, query identifiability on clusters of variable \citep{anand_causal_2023, yvernes2025relaxing}, reinforcement learning \citep{10.5555/3702676.3702869}, time series analysis \citep{assaad2022, 10.5555/3702676.3702683, multivariateICA}, neural network analysis \citep{NEURIPS2021_4f5c422f}, mechanistic interpretability \citep{geiger2025causalabstractiontheoreticalfoundation, ICLR2025_51157713}, and hierarchical modeling \citep{weinstein2024hierarchicalcausalmodels}. Despite this broad range of studies, existing notions of identifiability under causal abstractions remain largely developed in isolation, making them difficult to compare within a unified framework.

We address this gap by unifying several notions of identifiability over collections of causal diagrams. These notions form a strict hierarchy, allowing us to compare their strength and determine when one strictly subsumes another. This hierarchy can assess, for a given abstraction, whether existing theory misses identifiable queries or fully captures the identifiable ones. More precisely, our contributions are as follows:
\begin{itemize}
    \item We unify several notions of identifiability over collections of causal graphs and establish a strict hierarchy between them, enabling systematic comparison of their relative strength,
    \item In particular, we show that
    \begin{itemize}
        \item We establish that adjustment-based identifiability (ICA) does not imply uniform do-calculus identifiability (ICD), complementing the known result that ICD does not imply ICA.
        \item We prove that graphical identifiability (IG) strictly subsumes ICD, resolving the conjecture of \citet{yvernes2025identifiability}.
    \end{itemize}
\end{itemize}

The remainder of the paper is organized as follows. Section~\ref{sec:preliminaries} provides the necessary background, Section~\ref{sec:collectionGraphs} introduces identifiability notions for collections of causal diagrams, and Section~\ref{sec:strict_hierarchy} establishes their strict hierarchy. Section~\ref{sec:discussions} interprets this hierarchy and shows that, under a mild assumption, identifiability over a collection is determined by its maximal elements, allowing the collection to be reduced without loss of identifiability. Section~\ref{sec:conclusion} concludes, and the Appendix contains all the proofs.

\section{Preliminaries}\label{sec:preliminaries}
\subsection{Foundations of Causal Modeling}
\paragraph{Graphs.} 
A \emph{mixed graph} $\G = (\mathcal{V}, \mathcal{E})$ is a graph that contains two type of arrows, \emph{directed arrows} ($\rightarrow$) and \emph{dashed-bidirected arrows} ($\dashleftrightarrow$).
For a mixed graph $\G$, if $X\rightarrow Y$, then  $X$ is a \emph{parent} of $Y$ and $Y$ is a \emph{child} of $X$. A \emph{path} is a sequence of distinct vertices such that there exists an arrow in $\G$ connecting each element to its successor. A \emph{directed path} (or \emph{causal path}) is a path consisting only of directed edges ($\rightarrow$), such that all arrows are oriented towards the last vertex.
If there is a directed path from $X$ to $Y$, then $X$ is an \emph{ancestor} of $Y$, and $Y$ is a \emph{descendant} of $X$.\footnote{Note that a vertex is a descendant and an ancestor of itself, since for all variable  $V$ the path containing $V$ alone is a directed path from $V$ to $V$.} The sets of parents, children, ancestors and descendants of $X$ in $\G$ are denoted by $\Pa(X,\G)$, $\Ch(X,\G)$, $\Anc(X,\G)$ and $\Desc(X,\G)$ respectively. 
For two disjoint subsets \(\X, \Y \subseteq \V\), a \textit{path} from $\X$ to $\Y$ is a path from some \(X \in \X\) to some  \(Y \in \Y\). A path from $\X$ to $\Y$ is \textit{proper} if only its first node is in $\X$.
A \emph{directed cycle} is a circular list of distinct vertices in which each vertex is a parent of its successor. When a (directed) mixed graph does not contain any directed cycle, then this graph is called an \emph{Acyclic Directed Mixed Graph (ADMG).} When an ADMG does not contain any dashed-bidirected arrow, it is called a \emph{Directed Acyclic Graph (DAG)}. A graph \( \G_1 = (\V_1, \E_1) \) is said to be a subgraph of another graph \( \G_2 = (\V_2, \E_2) \) if \( \V_1 \subseteq \V_2 \) and \( \E_1 \subseteq \E_2 \). In this case, we write \( \G_1 \subseteq \G_2 \), which defines a partial order over graphs.

On a path $\pi$, a vertex $V$ is said to be a \emph{collider} on $\pi$ if it has two incoming arrows in $\pi$ (i.e., $\rightarrow V \leftarrow$, or $\rightarrow V \dashleftrightarrow$, or $\dashleftrightarrow V \leftarrow$ or $\dashleftrightarrow V \dashleftrightarrow$). Otherwise, it is said to be a \emph{non-collider} on $\pi$.
A vertex $V$ is said to be \emph{active} on a path relative to $\Z$ if 1) $V$ is a collider and one of its descendants is in $\Z$ or 2) $V$ is a non-collider and is not in $\Z$. A path $\pi$ is said to be \emph{active} given (or conditioned on) $\Z$ if every vertex on $\pi$ is active relative to $\Z$. Otherwise, $\pi$ is said to be \emph{inactive} given $\Z$. Given a graph $\mathcal{G}$, the sets $\X$ and $\Y$ are said to be \emph{d-separated} by $\Z$ if every path between $\X$ and $\Y$ is inactive given $\Z$. We denote this by $\X \ind_{\G} \Y \mid \Z$. Otherwise, $\X$ and $\Y$ are said to be \emph{d-connected} given $\Z$, which we denote by $\X \notind_{\G} \Y \mid \Z$.

The mutilated graph ${\mathcal{G}}_{\overline{\X}\underline{\Z}}$ is obtained from $\G$ by removing all edges with an arrowhead into $\X$ (i.e., $A \rightarrow \X$ or $A \dashleftrightarrow \X$, for any vertex $A$) 
and all edges with a tail out of $\Z$ (i.e., $\Z \rightarrow A$, for any vertex $A$).

\paragraph{Structural Causal Models.} A \emph{{Structural Causal Model} (SCM)} $\mathcal{M}$ consists of a 4-tuple $\langle \U, \V, \F_\M, P(\U)\rangle$, where $\U$ is a set of exogenous (latent) mutually independent variables and $\V$ is a set of endogenous (measured) variables. $\F_\M$ is a collection of functions $\{f_i\}_{i=1}^{|\V|}$ such that each endogenous variable $V_i\in\V$ is a function $f_i\in\F_\M$ of $\U_i\cup \Pa(V_i)$, 
where $\U_i\subseteq\U$ and $\Pa(V_i)\subseteq\V\setminus \{V_i\}$. The uncertainty is encoded through a probability distribution over the exogenous variables, $P(\U)$. The resulting joint distribution over \(\V\), obtained by marginalizing out \(\U\), is called the \emph{observational distribution} and is denoted \(P_{\mathcal{M}}\left(\V\right)\).
Each SCM $\mathcal{M}$ induces a mixed graph -- $\G_{\mathcal{M}} \coloneqq (\V, \E = (\E_{D},\E_{B}))$, known as a \emph{causal diagram}, that encodes the structural relations among $\V\cup\U$, where every endogenous variable $V_i \in \V$ is a vertex. 
$\G_{\mathcal{M}}$ contains directed edges which connect each variable $V_i \in \mathcal{V}$ to its parents $V_j \in \Pa(V_i)$ as $(V_j \rightarrow V_i)$, and bidirected edges between variables $V_i, V_j \in \mathcal{V}$ that share a common exogenous parent, i.e., such that $\mathcal{U}_i \cap \mathcal{U}_j \neq \emptyset$, denoted $V_i \dashleftrightarrow V_j$. Throughout this paper, we restrict attention to acyclic causal models, meaning that the associated causal diagrams are ADMGs.

\paragraph{Interventions.}
In a given SCM \(\mathcal{M}=\langle \U,\V,\mathcal{F},P(\U)\rangle\), let \(\mathbf{V}=(V_1,\dots,V_n) \) be a subset of \(\V\) and \(\mathbf v\) be a valuation of \(\mathcal V\). An intervention \(\Do(\mathbf{V}=\mathbf{v})\) replaces each structural function \(f_i\) for \(V_i\in\mathbf{V}\) by the constant assignment \(V_i:=v_i\), which in the causal diagram \(\G_{\mathcal{M}}\) corresponds to removing all incoming edges into each \(V_i\). The resulting joint distribution over \(\V\), obtained by marginalizing out \(\U\), is called the \emph{interventional distribution} and is denoted \(P_{\mathcal{M}}\left(\V\mid\Do(V =\mathbf{v})\right)\) or \(P_{\mathcal{M}}\left(\V\mid\Do(\mathbf{v})\right)\) for short. For any pairwise disjoint subsets $\Y,\X,\W \subseteq \V$, with $\Y \neq \emptyset$ and corresponding valuations $\mathbf y, \mathbf x, \mathbf w$, we will denote $P(\Y = \mathbf y \mid \Do(\X = \mathbf x), \mathbf X = \mathbf w)$ by $P(\mathbf y \mid \Do( \mathbf x), \mathbf w)$.

\paragraph{Compatibility}
A probability distribution $P$ is said to be \emph{compatible} with an ADMG $\mathcal{G}$ if there exists a SCM inducing $\mathcal{G}$ whose observational distribution is $P$. In particular, any distribution compatible with $\mathcal{G}$ is Markovian with respect to $\mathcal{G}$ (Proposition 6.31 in \citet{PetJanSch17}), that is, it satisfies the global Markov property, i.e., every d-separation relation in $\mathcal{G}$ implies the corresponding conditional independence in $P$.

\paragraph{Causal Queries \& Identification Functionals.}
Let $\V$ denote the set of observable variables and let $\X,\Y,\Z\subseteq\V$ be pairwise disjoint, with $\Y \neq \emptyset$. A \emph{causal query} is an expression of the form \(P(\mathbf{y}\mid\Do(\mathbf{x}),\mathbf{z})\). An \emph{identification functional} is a map \(\phi\) from joint distributions over $\V$ to $\mathbb{R}^+$. Given an SCM $\mathcal{M}$, $\phi$ is \emph{valid} for \(P(\mathbf{y}\mid\Do(\mathbf{x}),\mathbf{z})\) if
\(
\phi\left(P_{\mathcal{M}}(\V)\right)
=
P_{\mathcal{M}}\!\left(\mathbf{y}\mid\Do(\mathbf{x}),\mathbf{z}\right).
\)

All probability expressions and equalities in this paper are understood on their natural domains of definition.
\subsection{Identifiability in a Single Graph}
\label{sec:singleGraph} 
Different SCMs may induce the same causal diagram which represents an equivalence class of SCMs. 
A causal query \(P\bigl(\mathbf{y} \mid \Do(\mathbf{x}), \mathbf{z}\bigr)\) is said to be \emph{identifiable} in a graph $\G$ if all SCMs that induce $\G$ and the same observational distribution agree on the value of the query. Definition~\ref{def:single} formalizes this notion.

\begin{definition}[Identifiability in a Single Graph]\label{def:single}
    Let $\G=(\V,\E)$ be a causal diagram and let $\X,\Y,\Z\subseteq\V$ be pairwise disjoint, with \(\Y \neq \emptyset\). The causal query \(P\bigl(\mathbf{y}\mid\Do(\mathbf{x}),\mathbf{z}\bigr)\) is \emph{identifiable} in $\G$ if there exists an identification functional $\phi$ valid for \(P\bigl(\mathbf{y}\mid\Do(\mathbf{x}),\mathbf{z}\bigr)\) in every SCM $\M$ satisfying $\G_{\M}=\G$. In this case, $\phi$ is said to be valid for \(P\bigl(\mathbf{y}\mid\Do(\mathbf{x}),\mathbf{z}\bigr)\) in $\G$.
\end{definition}

Definition~\ref{def:single} formalizes identifiability as the existence of a valid identification functional for a given 
causal query. It is equivalent to Definition~3.2.3 in~\citet{pearl_causality_2009}, rewritten from a functional perspective. The following example illustrates this notion on a simple causal graph that we shall use later as a running example.

\begin{example}
\label{ex:identification}
Consider the DAG $\G \coloneqq$
\begin{tikzpicture}[ ->, >=stealth, circle, baseline={(6.2,-.1)},transform shape]
    \node (X) at (0, 0) {X};
    \node (Y) at (1, 0) {Y};
    \node (Z) at (-1, 0) {Z};
    \draw (X) -- (Y);
    \draw (Z) -- (X);
    \path (Z) edge[bend left=25] (Y);
\end{tikzpicture} 
and the query $P(y \mid \Do(x))$. Then the identification functional
\(
    \phi : P(\mathcal V) \mapsto  \sum_{z}P(y \mid x, z) P(z) 
\) is valid for $P(y \mid \Do(x))$ in $\G$. The derivation of this functional will be given later.
\end{example}

However, this definition is purely semantic: it does not directly provide a procedure for determining whether a query is identifiable, nor for constructing an identification functional when one exists. Addressing these questions requires systematic tools for transforming causal queries into valid identification functional. Pearl’s do-calculus provides such a framework through three transformation rules that enable reasoning about interventions directly from the graphical structure of the causal diagram. We restate these rules in Theorem~\ref{th:do_calculus}.

\begin{theorem}[Do-calculus rules, \cite{pearl_causality_2009}]
\label{th:do_calculus}
Let $\mathcal{G} = (\mathcal{V}, \mathcal{E})$ be an ADMG, and let $\mathbf{X,Y, Z, W}$ be pairwise disjoint subsets of $\mathcal{V}$, with $\Y \neq \emptyset$. Then the following rules hold for all distributions compatible with $\mathcal{G}$:
\begin{itemize}
    \item[$\mathcal{R}1$ \!] \text{insertion/deletion of observations:}\newline
        $\Y \ind_{\G_{\overline{\X}}} \Z \mid \X,\W 
        \Rightarrow P(\mathbf{y} \mid \Do(\mathbf{x}), \mathbf{z}, \mathbf{w})
        = P(\mathbf{y} \mid \Do(\mathbf{x}), \mathbf{w})$,
    \item[$\mathcal{R}2$ \!] \text{action/observation exchange:}\newline
        $\mathbf{Y} \ind_{\mathcal{G}_{\overline{\mathbf{X}}\, \underline{\mathbf{Z}}}} \mathbf{Z} \mid \mathbf{X}, \mathbf{W} \Rightarrow P(\mathbf{y} \mid \Do(\mathbf{x},\mathbf{z}), \mathbf{w})
        = P(\mathbf{y} \mid \Do(\mathbf{x}), \mathbf{z}, \mathbf{w})$,
    \item[$\mathcal{R}3$ \!] \text{insertion/deletion of actions :}\newline
        $\mathbf{Y} \ind_{\mathcal{G}_{\overline{\mathbf{X}}\, \overline{\mathbf{Z(W)}}}} \mathbf{Z} \mid \mathbf{X}, \mathbf{W} \Rightarrow P(\mathbf{y} \mid \Do(\mathbf{x},\mathbf{z}), \mathbf{w})
        = P(\mathbf{y} \mid \Do(\mathbf{x}), \mathbf{w})$,
\end{itemize}
where $\mathbf{Z}(\mathbf{W})$ is the set of $\mathbf{Z}$-nodes non-ancestors of $\mathbf{W}$ in $\G_{\overline{\mathbf X}}$.
\end{theorem}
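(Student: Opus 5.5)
We prove the three rules of Theorem~\ref{th:do_calculus} by reducing each to a d-separation statement in a suitably augmented graph, together with the global Markov property for distributions compatible with an ADMG.

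\textbf{Setup.} Fix an SCM $\M$ with $\G_\M=\G$. The interventional model $\M_{\mathbf x}$, obtained by $\Do(\X=\mathbf x)$, induces the diagram $\G_{\overline{\X}}$. Hence $P_\M(\cdot\mid\Do(\mathbf x))$ is compatible with $\G_{\overline{\X}}$ and satisfies its global Markov property. This also holds when the exogenous variables are made explicit: the graph over $\V\cup\U$ is a DAG in which each bidirected edge is replaced by a latent common parent, and d-separation among observed nodes is the same in both representations.

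\textbf{Rule $\mathcal R1$.} Apply the Markov property of $P_\M(\cdot\mid\Do(\mathbf x))$ with respect to $\G_{\overline{\X}}$. Since $\X$ is fixed by the intervention, conditioning on $\X$ is harmless. The separation $\Y\ind_{\G_{\overline{\X}}}\Z\mid\X,\W$ gives $\Y\ind\Z\mid\W$ under $\Do(\mathbf x)$, which is exactly the claimed equality.

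\textbf{Rule $\mathcal R2$.} Work in $\M_{\mathbf x}$, so $\X$ is already intervened upon. For each $Z\in\Z$, add an intervention indicator $F_Z$ with $F_Z\to Z$: in regime $F_Z=\text{idle}$, $Z$ follows its structural equation; in regime $F_Z=z$, $Z$ is set to $z$. This augmented model is again an acyclic SCM, and its observational distribution is Markov with respect to $\G_{\overline{\X}}$ plus the edges $F_Z\to Z$. The claim is equivalent to $\Y\ind F_\Z\mid \Z,\X,\W$ in this augmented graph, because conditioning on $\Z=\mathbf z$ makes the two regimes comparable.

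To verify this separation, take any path from $F_\Z$ to $\Y$. It must begin $F_Z\to Z$.
\begin{itemize}
\item If it continues out of $Z$ through a tail $Z\to\cdot$, then $Z$ is a non-collider in the conditioning set, so the path is blocked.
\item If it continues through an arrowhead into $Z$, then $Z$ is a collider in the conditioning set, so that node is open. The rest of the path, from $Z$ to $\Y$ starting with an arrowhead into $Z$, is a path in $\G_{\overline{\X}\underline{\Z}}$ up to its first return to $\Z$. Combining with the hypothesis $\Y\ind_{\G_{\overline{\X}\underline{\Z}}}\Z\mid\X,\W$, a standard shortest-path argument (take the last $\Z$-node on the path) shows the path is blocked.
\end{itemize}

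\textbf{Rule $\mathcal R3$.} Use the same augmentation, now comparing the regime $F_\Z=\mathbf z$ with the idle regime without conditioning on $\Z$. The claim then amounts to $\Y\ind F_\Z\mid\X,\W$. Take a path from $F_Z$ to $\Y$ that is active given $\X,\W$.
\begin{itemize}
\item If $Z\in\Z(\W)$, so $Z$ is not an ancestor of $\W$, then any collider downstream of $Z$ on the path cannot be opened by $\W$. Thus the path is active only if its segment after $Z$ is active in the graph where arrows into $\Z(\W)$ are irrelevant, i.e.\ in $\G_{\overline{\X}\,\overline{\Z(\W)}}$. This contradicts the hypothesis.
\item If $Z$ is an ancestor of $\W$, then $Z$ can be a non-collider or an opened collider. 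An active path then yields a corresponding active path in $\G_{\overline{\X}\,\overline{\Z(\W)}}$, where edges into such $Z$ are kept, again contradicting the hypothesis.
\end{itemize}

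\textbf{Main obstacle.} The delicate part is the path-surgery for $\mathcal R3$. Colliders on the path may be opened through descendants that pass through intervened nodes, and the ancestor relation must be taken in $\G_{\overline{\X}}$. I would handle this by induction on the number of $\Z$-nodes on the path, each time choosing the $\Z$-node closest to $\Y$. Everything else is routine application of the global Markov property.
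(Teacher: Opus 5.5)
The paper gives no proof of this theorem; it cites \citet{pearl_causality_2009}. The standard proof behind that citation uses the same intervention-indicator augmentation you propose. Your $\mathcal R1$ argument is fine, and so are your reductions of $\mathcal R2$ and $\mathcal R3$ to $\Y\ind F_\Z\mid\X,\Z,\W$ and $\Y\ind F_\Z\mid\X,\W$ in the augmented graph, as sufficient conditions. The genuine gap is the graphical step for $\mathcal R2$. You argue that the segment after the last $\Z$-node is a path of $\G_{\overline{\X}\underline{\Z}}$, hence blocked given $\X,\W$ by hypothesis, hence the whole path is blocked. That inference is invalid. In the augmented graph you condition on $\X\cup\W\cup\Z$, so a collider on that segment may be open only because it has a descendant in $\Z$. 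That descendant may even be reached through edges out of $\Z$, which $\G_{\overline{\X}\underline{\Z}}$ deletes. The same collider is then closed in $\G_{\overline{\X}\underline{\Z}}$ given $\X,\W$. Concretely, take $\X=\W=\emptyset$ and the DAG $A\to Z$, $A\to C$, $C\to Z$, $Y\to C$. The path $F_Z\to Z\leftarrow A\to C\leftarrow Y$ is active given $Z$, yet its segment $Z\leftarrow A\to C\leftarrow Y$ is blocked in $\G_{\underline Z}$ given $\emptyset$. The hypothesis of $\mathcal R2$ fails here only because of a \emph{different} path, $Z\leftarrow C\leftarrow Y$. So you must construct a new active path in $\G_{\overline{\X}\underline{\Z}}$, rerouting at such colliders down to $\Z$, rather than reason about the original path. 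The cleanest fix is the walk characterization of d-connection: an active walk has all colliders in the conditioning set and all non-colliders outside it, and any such walk can be shortened to an active path. On an active walk from $F_\Z$ to $\Y$ given $\X\cup\W\cup\Z$, every $\Z$-occurrence is a collider. The subwalk after the last $\Z$-occurrence starts with an arrowhead and uses no edge out of $\Z$, so it is an active walk in $\G_{\overline{\X}\underline{\Z}}$ given $\X\cup\W$.

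For $\mathcal R3$, your sketch skips the one case that needs an idea. The last $\Z$-node $Z^\ast$ may lie in $\Z(\W)$ while the path continues towards $\Y$ through an arrowhead into $Z^\ast$, an edge that $\G_{\overline{\X}\,\overline{\Z(\W)}}$ deletes. ``Choosing the $\Z$-node closest to $\Y$'' does not by itself handle this; you must show the case is impossible. First, $Z^\ast$ cannot be an open collider: it is not an ancestor of $\W$, and nothing outside $\X$ is an ancestor of $\X$ in $\G_{\overline{\X}}$. Hence the edge on the $F_\Z$ side is $Z^\ast\to P_1$. Following the path backwards gives a directed chain $Z^\ast\to P_1\to\cdots\to P_k$ that must end at a collider, at the latest at the first $\Z$-node, which is entered by $F_{Z_1}\to Z_1$. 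That collider is open, so it has a descendant in $\W$ (it cannot reach $\X$). Thus $Z^\ast\in\Anc(\W,\G_{\overline{\X}})$, a contradiction. In the remaining cases the segment after $Z^\ast$ is a path of $\G_{\overline{\X}\,\overline{\Z(\W)}}$ and stays active given $\X,\W$. This is because every directed path from a collider to $\W$ consists of ancestors of $\W$ and therefore avoids $\Z(\W)$. In particular, the difficulty you single out for $\mathcal R3$ (colliders opened through descendants passing through intervened nodes) does not arise there. It is exactly the difficulty in $\mathcal R2$, which your sketch treats as routine.
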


The do-calculus proceeds by repeatedly applying its rules to transform observational and interventional probabilities of the form \(P(\mathbf{y}\mid\mathrm{do}(\mathbf{x}),\mathbf{w})\). Starting from a causal quantity of interest, each application replaces the current expression with an equivalent one, with the validity of the transformation guaranteed by a graphical condition in the causal graph. Combining applications of do-calculus rules with standard probability manipulations, such as Bayes' rule and marginalization, enables causal reasoning.

In this paper, a \emph{causal derivation} is any finite sequence of applications of do-calculus rules, possibly combined with standard probability manipulations, that transforms a causal quantity into another one. Such a derivation is said to \emph{hold in} a causal diagram~\(\G\) if every d-separation condition required by the applied do-calculus rules is satisfied in~\(\G\). Standard probability manipulations are always interpreted within their usual domain of validity, and do not affect the notion of validity of the derivation. Given a set of variables \(\mathcal V\), a derivation is said to be \emph{on} \(\mathcal V\) if every variable appearing in any expression throughout the derivation belongs to \(\mathcal V\).

When a causal derivation rewrites a causal query \( P\bigl(\mathbf{y} \mid \Do(\mathbf{x}), \mathbf{z}\bigr) \) into a do-free expression \( \phi\bigl(P(\V)\bigr) \), \( \phi\bigl(P(\V)\bigr) \) depends only on the observational distribution $P(\V)$ and \(\phi\) is a valid identification functional for the query. In this case, $\phi$ is called an \emph{identification formula} and the causal derivation is said to \textit{identify}  \( P\bigl(\mathbf{y} \mid \Do(\mathbf{x}), \mathbf{z}\bigr) \). Example~\ref{ex:identification_avec_derivation} illustrates a causal derivation identifying a causal query in a given graph.

\begin{example}
\label{ex:identification_avec_derivation}
Consider the DAG $\G \coloneqq$\!\!\!
\begin{tikzpicture}[ ->, >=stealth, circle, baseline={(6.2,-.1)},transform shape]
    \node (X) at (0, 0) {X};
    \node (Y) at (1, 0) {Y};
    \node (Z) at (-1, 0) {Z};
    \draw (X) -- (Y);
    \draw (Z) -- (X);
    \path (Z) edge[bend left=25] (Y);
\end{tikzpicture}\!\!. 
Then $P(y \mid \Do(x))$ is identified by the following causal derivation:
\begin{align*}
    P(y \mid \Do(x)) &= \sum_{z}P(y,z \mid \Do(x))  &(\text{by marginalisation})\\
        &= \sum_{z}P(y \mid \Do(x), z) P(z \mid \Do(x)) &(\text{by chain rule})\\
        &= \sum_{z}P(y \mid \Do(x), z) P(z) &(\text{by } \mathcal R3)\\
        &= \sum_{z}P(y \mid x, z) P(z) &(\text{by } \mathcal R2)
\end{align*}
In this case, we recover Example~\ref{ex:identification} and an example of backdoor adjustment \citep{pearl_causality_2009}.
\end{example}

Do-calculus is sound and complete for causal identification in causal diagrams~\citep{shpitser_identification_2006}. Consequently, a causal query is identifiable in~\(\G\) if and only if there exists a causal derivation that rewrites the query into a do-free expression and holds in~\(\G\). Whenever such a derivation exists, it yields an identification formula for the query in~\(\G\). This principle underlies the \textsc{ID} algorithm~\citep{shpitser_identification_2006}, which systematically constructs such a derivation whenever the query is identifiable.

In practice, identifiability can also be established through more direct \emph{graphical criteria}, namely conditions expressed solely in terms of graphical properties that guarantee the existence of a valid identification functional. A standard example is the backdoor criterion \citep{pearl_causality_2009}.

\begin{definition}[Backdoor criterion \citep{pearl_causality_2009}]
\label{def:backdoor}
    Let $X$ and $Y$ be two variables in a causal graph. A set of variables $\mathbf Z$ satisfies the backdoor criterion with respect to $(X,Y)$ if:
    \begin{enumerate}
        \item No element of $\mathbf Z$ is a descendant of $X$, and
        \item $\mathbf Z$ blocks all paths between $X$ and $Y$ that contain an arrow into $X$.
    \end{enumerate}
\end{definition}

\begin{theorem}[Backdoor adjustment \citep{pearl_causality_2009}]
\label{th:backdoor_adjustment}
    Let $X$ and $Y$ be two variables in a causal graph. If a set of variables $\mathbf Z$ satisfies the backdoor criterion with respect to $(X,Y)$, then the causal effect of $X$ on $Y$ is identifiable and is given by
    
    $$
    P(y \mid \Do(x))
    =
    \sum_{\mathbf z} P(y \mid x,\mathbf z)P(\mathbf z).
    $$
\end{theorem}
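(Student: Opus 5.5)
The plan is to derive the backdoor adjustment formula from the do-calculus rules of Theorem~\ref{th:do_calculus}, exactly mirroring the derivation in Example~\ref{ex:identification_avec_derivation}. Since every rule holds for all distributions compatible with $\G$, the resulting do-free expression is automatically a valid identification functional. This gives identifiability in the sense of Definition~\ref{def:single}. Throughout, fix $\mathbf Z$ satisfying Definition~\ref{def:backdoor} with respect to $(X,Y)$. If the ADMG contains bidirected edges, we read ``arrow into $X$'' as including $\dashleftrightarrow X$.

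\textbf{Step 1 (probability manipulations).} Marginalising over $\mathbf Z$ and applying the chain rule gives
\[
P(y\mid\Do(x))=\sum_{\mathbf z}P(y\mid\Do(x),\mathbf z)\,P(\mathbf z\mid\Do(x)).
\]

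\textbf{Step 2 (rule $\mathcal R3$ on $P(\mathbf z\mid\Do(x))$).} We apply $\mathcal R3$ with empty $\X$ and $\W$, intervened set $\{X\}$, and outcome $\mathbf Z$. The required condition is $\mathbf Z\ind_{\G_{\overline{X}}} X$. In $\G_{\overline X}$ every edge incident to $X$ is an outgoing directed edge. Hence any path from $X$ to some $Z\in\mathbf Z$ starts $X\rightarrow\cdots$. If the path were directed, $Z$ would be a descendant of $X$, contradicting condition 1. Otherwise the path contains a collider. Take the first collider $C$ along the path: the segment from $X$ to $C$ is directed, so $C\in\Desc(X)$. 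With an empty conditioning set, a collider is active only if it has a descendant in the empty set, which never happens, so the path is blocked. Therefore $P(\mathbf z\mid\Do(x))=P(\mathbf z)$.

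\textbf{Step 3 (rule $\mathcal R2$ on $P(y\mid\Do(x),\mathbf z)$).} We apply $\mathcal R2$ with empty $\X$, $\W=\mathbf Z$, and the exchanged set $\{X\}$. The condition is $Y\ind_{\G_{\underline X}} X\mid\mathbf Z$. In $\G_{\underline X}$ the only edges at $X$ carry arrowheads into $X$, so every path from $X$ to $Y$ in $\G_{\underline X}$ is a path of $\G$ with an arrow into $X$. By condition 2 such a path is blocked by $\mathbf Z$ in $\G$.

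This is the main obstacle. Blocking was established in $\G$, and deleting edges could in principle change the descendant sets that decide whether a collider is active. Removing edges only shrinks descendant sets, so a collider that was inactive in $\G$ stays inactive. A non-collider lying in $\mathbf Z$ stays blocking. Hence blocking in $\G$ implies blocking in $\G_{\underline X}$. This yields $P(y\mid\Do(x),\mathbf z)=P(y\mid x,\mathbf z)$.

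\textbf{Conclusion.} Substituting Steps 2 and 3 into Step 1 gives $\sum_{\mathbf z}P(y\mid x,\mathbf z)P(\mathbf z)$. This is a causal derivation that holds in $\G$ and yields a do-free formula, so the effect is identifiable with the stated functional.
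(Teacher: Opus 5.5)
Your proposal is correct and follows essentially the same route as the paper. The paper cites the result from Pearl and justifies it by the same derivation in Example~\ref{ex:identification_avec_derivation} and in the ICB $\Rightarrow$ ICD part of the proof of Theorem~\ref{th:liens_notions_id}: marginalisation and the chain rule, then $\mathcal R3$ for $P(\mathbf z\mid\Do(x))=P(\mathbf z)$, then $\mathcal R2$ for $P(y\mid\Do(x),\mathbf z)=P(y\mid x,\mathbf z)$. The only difference is that you actually check the two d-separation conditions in $\mathcal{G}_{\overline{X}}$ and $\mathcal{G}_{\underline{X}}$, including why blocking in $\mathcal{G}$ carries over to $\mathcal{G}_{\underline{X}}$, whereas the paper only asserts them.
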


When no set satisfying the backdoor criterion exists, the causal effect may still be identifiable through an alternative graphical criterion. The frontdoor criterion \citep{pearl_causality_2009} provides a classical example of such an identification strategy. Rather than adjusting directly for variables that block backdoor paths from $X$ to $Y$, it exploits variables lying on the directed paths from $X$ to $Y$ to identify the effect indirectly.

\begin{definition}[Frontdoor criterion \citep{pearl_causality_2009}]
\label{def:frontdoor}
    Let $X$ and $Y$ be two variables in a causal graph. A set of variables $\mathbf Z$ satisfies the frontdoor criterion with respect to $(X,Y)$ if:
    \begin{enumerate}
    \item $\mathbf Z$ intercepts all directed paths from $X$ to $Y$;
    \item there is no unblocked backdoor path from $X$ to $\mathbf Z$; and
    \item $X$ blocks all paths from $\mathbf Z$ to $Y$ that contain an arrow into any variable in $\mathbf Z$.
    \end{enumerate}
\end{definition}

\begin{theorem}[Frontdoor adjustment \citep{pearl_causality_2009}]
\label{th:frontdoor_adjustment}
    Let $X$ and $Y$ be two variables in a causal graph. If a set of variables $\mathbf Z$ satisfies the frontdoor criterion with respect to $(X,Y)$ and $P(x,\mathbf z)>0$, then the causal effect of $X$ on $Y$ is identifiable and is given by
    
    $$
    P(y \mid \Do(x))
    =
    \sum_{\mathbf z} P(\mathbf z \mid x)
    \sum_{x'} P(y \mid \mathbf z,x')P(x').
    $$
\end{theorem}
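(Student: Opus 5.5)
The plan is to derive the frontdoor formula by a chain of do-calculus steps (Theorem~\ref{th:do_calculus}) combined with standard probability manipulations, in the spirit of Example~\ref{ex:identification_avec_derivation}. Throughout, $\mathbf Z$ satisfies Definition~\ref{def:frontdoor}. The skeleton is
\begin{align*}
P(y\mid\Do(x)) &= \sum_{\mathbf z} P(y\mid \Do(x),\mathbf z)\,P(\mathbf z\mid\Do(x)) \\
&= \sum_{\mathbf z} P(y\mid\Do(x,\mathbf z))\,P(\mathbf z\mid \Do(x)) \\
&= \sum_{\mathbf z} P(y\mid\Do(\mathbf z))\,P(\mathbf z\mid x),
\end{align*}
followed by identifying $P(y\mid\Do(\mathbf z))$ as $\sum_{x'}P(y\mid\mathbf z,x')P(x')$.

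\emph{Term $P(\mathbf z\mid\Do(x))=P(\mathbf z\mid x)$.} This is rule $\mathcal R2$ with empty conditioning set. It requires $\mathbf Z\ind_{\G_{\underline{X}}} X$. In $\G_{\underline X}$ every remaining path from $X$ to $\mathbf Z$ starts with an arrowhead into $X$, i.e.\ it is a backdoor path. Condition 2 of Definition~\ref{def:frontdoor} says these paths are blocked given $\emptyset$, so the separation holds.

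\emph{Term $P(y\mid\Do(x),\mathbf z)=P(y\mid\Do(x,\mathbf z))$.} This is $\mathcal R2$, which requires $Y\ind_{\G_{\overline X\,\underline{\mathbf Z}}}\mathbf Z\mid X$. In $\G_{\overline X\,\underline{\mathbf Z}}$ a path from $\mathbf Z$ to $Y$ must leave $\mathbf Z$ through an arrowhead into $\mathbf Z$. If it passes through $X$, then $X$ is a non-collider, since no arrows enter $X$, and it is conditioned on, so the path is blocked. Otherwise it is a path of the original graph into $\mathbf Z$ avoiding $X$, which condition 3 says is blocked by $X$. Here care is needed: "blocked by $X$" must also account for colliders whose descendants include $X$.

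\emph{Term $P(y\mid\Do(x,\mathbf z))=P(y\mid\Do(\mathbf z))$.} This is $\mathcal R3$, which requires $Y\ind X$ in $\G_{\overline{\mathbf Z}\,\overline X}$. Any directed path from $X$ to $Y$ meets $\mathbf Z$ by condition 1, so it is cut. A path starting $X\leftarrow$ is removed entirely. Any other path from $X$ to $Y$ must contain a collider. That collider cannot have descendants in the empty conditioning set, so the path is blocked.

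\emph{Identifying $P(y\mid\Do(\mathbf z))$.} This is a backdoor adjustment (Theorem~\ref{th:backdoor_adjustment}, or directly $\mathcal R2$ and $\mathcal R3$ after summing over $x'$) with adjustment set $\{X\}$:
\[
P(y\mid\Do(\mathbf z))=\sum_{x'}P(y\mid\mathbf z,x')P(x').
\]
$X$ is not a descendant of $\mathbf Z$, because a directed path from $\mathbf Z$ to $X$ together with $X\to\mathbf Z$ edges would either create a cycle or an active backdoor path violating condition 2. $X$ blocks the backdoor paths from $\mathbf Z$ to $Y$ by condition 3.

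\emph{Positivity.} The hypothesis $P(x,\mathbf z)>0$ makes all the conditionals appearing above well-defined.

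The main obstacle is the careful d-separation bookkeeping in the mutilated graphs, especially the $\mathcal R2$ step for $\mathbf Z$ given $X$. There, condition 3 of Definition~\ref{def:frontdoor} must be read as a d-separation statement that also handles colliders, and for a set $\mathbf Z$ rather than a single variable. If that reading causes trouble, I would fall back on the standard interpretation, namely that $\mathbf Z$ and $Y$ are d-separated by $X$ in $\G_{\underline{\mathbf Z}}$, which is exactly what $\mathcal R2$ needs.
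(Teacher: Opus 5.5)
Your proof is correct and follows the same do-calculus chain the paper writes out in its appendix proof of ICF $\Rightarrow$ ICD: marginalize, apply $\mathcal R2$ for $P(\mathbf z\mid\Do(x))=P(\mathbf z\mid x)$, apply $\mathcal R2$ to turn $\mathbf z$ into an action, apply $\mathcal R3$ to drop $\Do(x)$, and finish with backdoor adjustment on $X$; your d-separation checks are more detailed than the paper's. Two small fixes: the $\mathcal R3$ condition is $Y\ind X\mid\mathbf Z$ in $\G_{\overline{\mathbf Z}\,\overline X}$ (conditioning on $\mathbf Z$, not unconditional), which still follows from your check because $\mathbf Z$-nodes are parentless there and so cannot open colliders; and your collider worry in the second step is resolved by noting that deleting edges only shrinks descendant sets, so a path avoiding $X$ that is blocked given $X$ in $\G$ stays blocked in $\G_{\overline X\,\underline{\mathbf Z}}$.
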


The backdoor criterion also admits a more general formulation. We therefore recall the generalized adjustment criterion of \citet{Perkovic_2015}, which characterizes valid adjustment sets; see also \citet{pmlr-vR12-zander14a}.

\begin{definition}[Generalized Adjustment Criterion]
\label{def:critere_ajustement}
Let $\mathbf{X}, \mathbf{Y}$ and $\mathbf{Z}$ be pairwise distinct sets of variables in a causal diagram $\G$.
$\mathbf{Z}$ is said to satisfy the \emph{adjustment criterion} relative to $\mathbf{X}$ and $\mathbf{Y}$ in $\G$ if:
\begin{enumerate}
    \item \( \Forb\left(\mathbf{X}, \mathbf{Y}, \G\right) \cap \mathbf{Z} = \emptyset\); and
    \item $\mathbf{Z}$ blocks all proper non-causal paths from $\mathbf{X}$ to $\mathbf{Y}$ in $\G$.
\end{enumerate}
where the \emph{forbidden set} \( \Forb\left(\mathbf{X}, \mathbf{Y}, \G\right)\) is the set of all descendants of any \( W \notin \mathbf{X}\) which lies on a proper causal path from $\mathbf{X}$ to $\mathbf{Y}$.
\end{definition}


\begin{theorem}[Adjustment\citep{Perkovic_2015}]
\label{th:adjustment_perkovic}

    Let $\mathbf{X}, \mathbf{Y}$ and $\mathbf{Z}$ be pairwise distinct sets of variables in a causal diagram $\G$. Then the following propositions are equivalent:
    \begin{itemize}
        \item $\mathbf Z$ satisfies the \emph{adjustment criterion} relative to $\mathbf{X}$ and $\mathbf{Y}$ in $\G$
        
        \item For all $P$ Markovian with respect to $\G$, we have 
        \[
    P(\mathbf{y} \mid \Do(\mathbf{x})) =
    \begin{cases}
    P(\mathbf{y} \mid \mathbf{x}) & \text{if } \mathbf{Z} = \emptyset, \\
    \sum_{\mathbf{z}} P(\mathbf{y} \mid \mathbf{x}, \mathbf{z}) P(\mathbf{z}) & \text{otherwise.}
    \end{cases}
    \]
    \end{itemize}
\end{theorem}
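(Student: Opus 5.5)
We prove the two implications separately, and throughout we write $\Forb$ for $\Forb(\X,\Y,\G)$.

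\emph{Soundness (criterion $\Rightarrow$ formula).} We first reduce the claim to a statement on a DAG. If $\G$ contains bidirected edges, we replace each $A \dashleftrightarrow B$ by a fresh latent node $U_{AB}\to A$, $U_{AB}\to B$. This yields a DAG $\G'$ with the same d-separations among $\V$. The criterion transfers to $\G'$: a latent node never lies on a proper causal path, and paths through latents are non-causal and still blocked. The same construction shows that a distribution Markovian with respect to $\G$ extends to one Markovian with respect to $\G'$.

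On the DAG, the standard route is to show $P(\mathbf y\mid \Do(\mathbf x))=\sum_{\mathbf z}P(\mathbf y\mid \Do(\mathbf x),\mathbf z)P(\mathbf z\mid\Do(\mathbf x))$ and then prove two identities:
\begin{enumerate}
\item $P(\mathbf z\mid \Do(\mathbf x))=P(\mathbf z)$, via $\mathcal R3$. This needs $\mathbf Z \ind \X$ in $\G_{\overline{\X}}$.
\item $P(\mathbf y\mid\Do(\mathbf x),\mathbf z)=P(\mathbf y\mid\mathbf x,\mathbf z)$, via $\mathcal R2$. This needs $\Y\ind\X\mid\mathbf Z$ in $\G_{\underline{\X}}$.
\end{enumerate}

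The difficulty is that the adjustment criterion does not exclude descendants of $\X$ outside $\Forb$. As a result, condition 1 can fail even though adjustment is valid. We therefore follow Perkovi\'c et al. and first pass to the \emph{proper back-door graph} $\G^{pbd}$, obtained by deleting the first edge of every proper causal path from $\X$ to $\Y$. The key lemma is that $\mathbf Z$ satisfies the criterion iff $\Forb\cap\mathbf Z=\emptyset$ and $\X\ind_{\G^{pbd}}\Y\mid\mathbf Z$. To prove it, we take a shortest d-connecting path in $\G^{pbd}$ and argue by induction on colliders: each collider's descendant in $\mathbf Z$ is not in $\Forb$, which lets us reroute the path into a proper non-causal path active in $\G$.

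Given the lemma, we split $\mathbf Z$ into $\mathbf Z_1=\mathbf Z\setminus\Desc(\X)$ and $\mathbf Z_2=\mathbf Z\cap\Desc(\X)$. We then write $P(\mathbf y\mid\Do(\mathbf x))=\sum P(\mathbf y\mid\Do(\mathbf x),\mathbf z)P(\mathbf z\mid\Do(\mathbf x))$ and show that the $\mathbf Z_2$ part integrates out consistently. This step uses the fact that $\mathbf Z_2$ is not in $\Forb$, so $\mathbf Z_2$ is not an ancestor of $\Y$ along causal routes from $\X$.

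An alternative for the bookkeeping is to verify the equality directly from the truncated factorization of $P(\V\mid\Do(\mathbf x))$, summing over the remaining variables in topological order. We expect this to be cleaner, and we would try it first.

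\emph{Completeness (formula $\Rightarrow$ criterion).} We argue by contraposition. Suppose first that some $Z\in\mathbf Z\cap\Forb$. We construct a linear Gaussian (or binary) SCM Markovian to $\G$ that keeps only the edges of one causal path $X\to\cdots\to W\to\cdots\to Y$ together with $W\to\cdots\to Z$. In this model adjusting for $Z$ biases the effect, so the two sides differ.

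Suppose instead that a proper non-causal path is open given $\mathbf Z$. We choose a minimal such path and build a model supported only on that path, with colliders' descendant chains into $\mathbf Z$ kept. There conditioning induces a nonzero spurious association, so $\sum_{\mathbf z}P(\mathbf y\mid\mathbf x,\mathbf z)P(\mathbf z)\neq P(\mathbf y\mid\Do(\mathbf x))$.

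The case $\mathbf Z=\emptyset$ is handled identically with the formula $P(\mathbf y\mid\mathbf x)$.

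\emph{Main obstacle.} We expect two places to be hardest: the graph-theoretic lemma equating the criterion with d-separation in $\G^{pbd}$, and the construction of counterexamples when the open path has colliders whose $\mathbf Z$-descendants interact with $\Forb$. For the latter we would pick the path minimizing the number of colliders and argue that the interactions cannot occur.
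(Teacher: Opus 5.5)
The paper does not prove this statement; it imports it from \citet{Perkovic_2015}. So the question is whether your self-contained argument closes. Your completeness half follows the standard route: linear-Gaussian models supported on a witness subgraph, which is also the construction the paper borrows in the proof of Proposition~\ref{prop:ICA_not_ICD}. The delicate cases you flag there are only announced, not handled.

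The genuine gap is in soundness, at ``the $\Z_2$ part integrates out consistently''. First, the $\mathcal R2$ step $P(\mathbf y\mid\Do(\mathbf x),\mathbf z)=P(\mathbf y\mid\mathbf x,\mathbf z)$ does follow from the criterion even when $\Z\cap\Desc(\X)\neq\emptyset$. A path d-connecting $\X$ and $\Y$ given $\Z$ in $\G_{\underline{\X}}$, cut at its last treatment node, is a proper non-causal path active in $\G$. So everything reduces to
\[
\sum_{\mathbf z}P(\mathbf y\mid\mathbf x,\mathbf z)\bigl[P(\mathbf z\mid\Do(\mathbf x))-P(\mathbf z)\bigr]=0,
\]
and for this you give no argument. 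The justification you hint at is that $\Z_2\cap\Forb=\emptyset$, so $\Z_2$ is causally irrelevant to $\Y$. That suggests either dropping $\Z_2$ given $\X\cup\Z_1$, or falling back on back-door adjustment for $\Z_1$. Both fail.

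Take $X_1\rightarrow D\rightarrow W\rightarrow X_0\rightarrow Y$, $A\rightarrow W$, $A\rightarrow Y$, with $\X=\{X_0,X_1\}$ and $\Z=\{D,W\}$.
\begin{itemize}
\item The only proper causal path is $X_0\rightarrow Y$, so $\Forb(\X,\Y,\G)=\{Y\}$.
\item The proper non-causal paths $X_0\leftarrow W\leftarrow A\rightarrow Y$ and $X_1\rightarrow D\rightarrow W\leftarrow A\rightarrow Y$ are blocked by $W$ and by $D$. Hence $\Z$ satisfies the criterion.
\item But $\Z\subseteq\Desc(X_1)$, so $\Z_1=\emptyset$, and $\emptyset$ is not an adjustment set.
\item $W$ is d-connected to $Y$ given $\X$ through $W\leftarrow A\rightarrow Y$, also in $\G_{\overline{\X}}$. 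So $P(y\mid\mathbf x,d,w)$ genuinely depends on $(d,w)$.
\item $W$ even lies on the non-proper causal route $X_1\rightarrow D\rightarrow W\rightarrow X_0\rightarrow Y$. So your stated fact, that $\Z_2$ is not an ancestor of $\Y$ along causal routes from $\X$, is itself false.
\end{itemize}

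Adjustment is nonetheless valid here, for a different reason. The set $\{A\}$ is a back-door set with $Y\ind\{D,W\}\mid\X\cup\{A\}$ and $A\ind\X\mid\{D,W\}$. Hence $P(y\mid\mathbf x,d,w)=\sum_a P(y\mid\mathbf x,a)P(a\mid d,w)$. Averaging against $P(d,w)$ gives $\sum_a P(y\mid\mathbf x,a)P(a)=P(y\mid\Do(\mathbf x))$.

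That is the missing idea. From the criterion you must construct an auxiliary back-door set $\mathbf A$ of non-descendants of $\X$ with $\Y\ind\Z\setminus\mathbf A\mid\X\cup\mathbf A$ and $\X\ind\mathbf A\setminus\Z\mid\Z$. Alternatively, you can prove the counterfactual exchangeability $\Y_{\mathbf x}\ind\X\mid\Z$ on a twin network. Replacing $A$ by $W\dashleftrightarrow Y$ shows that $\mathbf A$ must in general be latent. This is where your reduction to a DAG with explicit latents actually earns its keep.

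Neither of your other tools supplies these separations. The proper back-door lemma is a purely graphical reformulation, and one of its directions is immediate. The topological-order bookkeeping is unspecified, and $P(\mathbf y\mid\mathbf x,\mathbf z)$ is a ratio of marginals that only simplifies through such independences.

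A smaller point: a distribution satisfying the global Markov property for an ADMG need not be a margin of one that is Markov to the latent DAG. For $Z\rightarrow X\rightarrow Y$, $X\dashleftrightarrow Y$, every distribution is Markovian, yet the instrumental inequality can fail. Argue from the SCM itself, whose exogenous variables already furnish the latent DAG. This is also the only reading under which $P(\mathbf y\mid\Do(\mathbf x))$ is defined.
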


\section{Identifiability in a Collection of Graphs.}
\label{sec:collectionGraphs} 

When a single causal diagram is considered, do-calculus provides a complete characterization of causal effect identification: a query is identifiable if and only if it can be transformed into a do-free expression via a valid causal derivation in the graph. However, specifying a full causal diagram demands detailed knowledge of all causal relationships among observed variables, an assumption that is often unrealistic in practice.  To mitigate this challenge, many recent works have adopted \emph{causal abstractions}: simplified or coarse-grained models that capture only a subset of causal dependencies.

All the causal abstractions we are aware of determine a collection $\C$ of causal diagrams on $\V$ that satisfy their specified constraints.  We study identifiability of a causal query within such a class: broadly, a query is \emph{identifiable} in $\C$ if the same identification functional can be derived for every graph $G \in \C$.  In practice, however, this notion can be ambiguous.  The literature has proposed multiple identification strategies under causal abstractions, giving rise to several distinct definitions of identifiability in this setting.

\subsection{Identifiability Under Different Knowledge Conditions}

We extend Pearl’s notion of identifiability from a single causal diagram to classes of diagrams. In the classical setting, identifiability is a uniqueness property over a fixed causal graph: a causal query is identifiable if all SCMs inducing the same graph and observational distribution agree on its value. When moving to classes of graphs, this requirement naturally lifts to uniform validity over all SCMs whose graph belongs to the class. Definition~\ref{def:IG} formalizes this extension.

\begin{definition}[Identifiability through Graphs (IG)]
\label{def:IG}
    Let $\C$ be a class of causal diagrams over~$\V$, and let $\X,\Y,\Z\subseteq\V$ be pairwise disjoint with \(\mathbf{Y} \neq \emptyset\). The causal query 
    \(P\bigl(\mathbf{y} \mid \Do(\mathbf{x}),\mathbf{z}\bigr)\)
    is \emph{identifiable through graphs} in $\C$ if there exists an identification functional valid for \(P\bigl(\mathbf{y} \mid \Do(\mathbf{x}),\mathbf{z}\bigr)\) in every SCM $\M$ satisfying \(\G_{\M} \in \C\).
\end{definition}

\begin{example}
    Consider the graphs $\G_1 \coloneqq Z \leftarrow X \rightarrow Y$ and $\G_2 \coloneqq X \leftarrow Z \rightarrow Y$. Then $P(y \mid \Do(x))$ is identifiable through graphs in $\C \coloneqq \{ \G_1, \G_2 \}$. Details are provided in the proof of Proposition~\ref{prop:ICA_not_ICD}.
\end{example}

This notion is called graphical because identifiability is required to hold uniformly over all SCMs whose causal graph lies in $\C$, regardless of their observational distributions. For instance, \citet{jaber_causal_2024} provide a complete characterization of identifiability through graphs in Partial Ancestral Graphs (PAGs), which represent Markov equivalence classes \(\C\) of ADMGs and encode the ancestral relations that are invariant across the class.

We now refine this perspective by incorporating additional observational information. In particular, we consider the case where the true observational distribution $P^\star(\V)$ is known. This induces a further restriction on the admissible SCMs, yielding a smaller class defined by both graphical and distributional constraints.

\begin{definition}[Identifiability through Graphs knowing $P^\star$ (IGP)]
\label{def:IGP}
    Let $\C$ be a class of causal diagrams over~$\V$, and let $\X,\Y,\Z\subseteq\V$ be pairwise disjoint with \(\mathbf{Y} \neq \emptyset\). Let $P^\star(\V)$ denote the true data density. The causal query 
    \(P\bigl(\mathbf{y} \mid \Do(\mathbf{x}),\mathbf{z}\bigr)\)
    is \emph{identifiable through graphs knowing $P^\star$} in $\C$ if there exists an identification functional valid for \(P\bigl(\mathbf{y} \mid \Do(\mathbf{x}),\mathbf{z}\bigr)\) in every SCM $\M$ satisfying \(\G_{\M} \in \C\) and $P_M(\V) = P^\star(\V)$.
\end{definition}

The distinction between IG and IGP lies in the information available to the analyst. IG assumes access only to graphical knowledge, and requires the query to be identifiable uniformly over all SCMs whose causal graph belongs to a given class. IGP additionally assumes knowledge of the true observational distribution, thereby imposing both graphical and distributional constraints, but still allowing for a collection of SCMs. In this sense, IGP corresponds to a strictly more informed setting, in which identifiability is no longer based on graphical information alone. 

Here, IGP should be viewed as a representative example of a broader family of extensions in which identifiability is defined under additional non-graphical information. Such side information may include structural assumptions, such as linearity, additive noise, or restrictions on the functional form of causal mechanisms, as well as distributional assumptions. While other forms of side information could be considered, we focus on IGP as an illustrative example to demonstrate how the framework accommodates such extensions.

From a computational perspective, both notions remain difficult to verify. IG would require validity over all graphs belonging to $\C$, while IGP depends on exact knowledge of $P^\star(\V)$, which is usually unavailable in practice. These limitations motivate our focus on graphical characterizations and sufficient conditions for identifiability in $\C$.

\subsection{Identifiability Under Different Methodological Conditions}

IG provides a natural extension of identifiability to collections of diagrams, without requiring structural assumptions or additional constraints on the observational distribution. A key distinction between notions of identifiability in graphs therefore arises in how identifiability is established over a collection of causal diagrams. As discussed in Section~\ref{sec:singleGraph}, methods developed for a single graph can often be extended to a collection by requiring that the same identification strategy applies uniformly across all compatible graphs. This leads to weaker, yet more tractable, notions of identifiability, trading some degree of generality for computational and conceptual simplicity.

One natural approach consists in applying do-calculus uniformly over the entire collection. More precisely, the objective is to derive a single causal derivation that holds simultaneously in every causal diagram of the class.

\begin{definition}[Identifiability by Common Derivation (ICD)]
\label{def:ICD}
    Let \(\mathcal{C}\) be a class of causal diagrams over the variable set \(\mathcal{V}\), and let \(\mathbf{X}, \mathbf{Y}, \mathbf{Z} \subseteq \mathcal{V}\) be pairwise disjoint with \(\mathbf{Y} \neq \emptyset\). The causal query \(P\bigl(\mathbf{y}\mid \Do(\mathbf{x}),\,\mathbf{z}\bigr)\) is \emph{identifiable by common derivation} in \(\mathcal{C}\) if there exists a causal derivation that holds in every \(\G \in \mathcal{C}\) and transforms the query into an expression involving only observational distributions.
\end{definition}

Moreover, we restate the notion of an \emph{atomically complete calculus}, introduced in \citet{jaber_causal_2024}, for a class \(\C\). Such a calculus consists of three atomic rules $(\mathcal A1, \mathcal A2, \mathcal A3)$, which are counterparts of the three do-calculus rules $(\mathcal R1, \mathcal R2, \mathcal R3)$. Each atomic rule specifies, based only on the structure of the class \(\C\), when the corresponding do-calculus rule application holds uniformly over \(\C\). The rules are required to satisfy the following two properties, for each \(i\):
\begin{enumerate}
  \item \textbf{(Soundness)} If the atomic rule \(\mathcal Ai\) applies to \(\C\), then the corresponding rule \(\mathcal Ri\) holds in every graph in \(\C\).
  \item \textbf{(Completeness)} If the atomic rule \(\mathcal Ai\) does \emph{not} apply to \(\C\), then there exists at least one graph \(G \in \C\) in which the corresponding do-calculus rule \(\mathcal Ri\) does not hold.
\end{enumerate}

Thus, the atomic rules characterize exactly which do-calculus transformations hold uniformly over the class: soundness guarantees that every applicable atomic rule holds in every graph in \(\C\), while completeness guarantees that a transformation cannot hold uniformly over \(\C\) when its corresponding atomic rule does not apply. Consequently, any derivation obtained within an atomically complete calculus immediately yields a causal derivation that holds uniformly over the entire class, thereby establishing ICD for the causal query in \(\mathcal C\). An important example is given by Partial Ancestral Graphs (PAGs), which represent Markov equivalence classes \(\C\) of ADMGs and encode the ancestral relations that are invariant across the class. For this setting, ICD was established in \citet{jaber_causal_2024}. Other examples of atomically complete calculi are given in \citet{anand_causal_2023, yvernes2025relaxing, Ferreira_Assaad_2025}.

The main distinction between identifiability through graphs and identifiability by common derivation lies in the permissible proof techniques. In the latter, one must provide a single causal derivation that holds in all compatible graphs. By contrast, IG permits the use of different graphical identification procedures for different graphs in the class, provided that all resulting procedures ultimately yield the same valid identification functional. 

Beyond do-calculus, graphical identification criteria developed for single graphs can also be extended uniformly to collections of graphs. This leads to notions such as \emph{identifiability by common backdoor} (ICB),  \emph{identifiability by common frontdoor} (ICF) or \emph{identifiability by common adjustment} (ICA)  where one requires that the corresponding graphical criterion is satisfied in every graph of the class $\mathcal C$. More generally, this gives rise to a broad family of graphical identifiability notions based on uniform graphical conditions. For instance, Example~\ref{ex:ICB} illustrates a class in which a causal query is identifiable by common backdoor adjustment.

\begin{example}[ICB]
\label{ex:ICB}
Let us consider the following class of causal diagrams $\C$ as follows:
\[\C \coloneqq 
\left\{ 
\begin{tikzpicture}[ ->, >=stealth, circle, baseline=(current bounding box.center), scale=.8, transform shape]
    \node (X) at (0, 0) {X};
    \node (Y) at (1, 0) {Y};
    \node (Z) at (-1, 0) {Z};
    \node (U) at (-1.707, -0.707) {U};
    \node (V) at (-1.707, 0.707) {V};

    \draw (X) -- (Y);
    \draw (Z) -- (X);
    \draw (U) -- (Z);
    \draw (V) -- (Z);
    \draw (U) -- (V);
    \path (V) edge[bend left=25] (Y);
\end{tikzpicture}
,
\begin{tikzpicture}[ ->, >=stealth, circle, baseline=(current bounding box.center), scale=.8, transform shape]
    \node (X) at (0, 0) {X};
    \node (Y) at (-1, 0) {Y};
    \node (Z) at (1, 0) {Z};
    \node (U) at (1.707, -0.707) {U};
    \node (V) at (1.707, 0.707) {V};

    \draw (X) -- (Y);
    \draw (Z) -- (X);
    \draw (U) -- (Z);
    \draw (V) -- (Z);
    \draw (V) -- (U);
    \path (U) edge[bend left=25] (Y);
\end{tikzpicture}
\right\}
\]

In both graphs of $\C$, $\{Z\}$ is a valid backdoor set. As a result, any causal query $P\left(y \mid \Do(x)\right)$ is identified by the formula $P\left(y \mid \Do(x)\right) = \sum_z P\left(y \mid x, z\right) P(z)$.

\end{example}

As an example, identifiability by common backdoor has been studied in the context of Summary Causal Graphs (SCGs), a causal abstraction that summarizes causal relationships between time series by abstracting away the delays between causes and their effects, in \citet{10.5555/3702676.3702683}. This was later extended into a complete criterion for identifiability by common adjustment by \citet{multivariateICA}. In a different framework, \citet{Perkovic_2018} established a sound and complete criterion for identifiability by common adjustment in Markov equivalence classes of ancestral graphs.

\section{The Strict Hierarchy of Identifiability in Collections of Graphs}
\label{sec:strict_hierarchy}

Section~\ref{sec:collectionGraphs} introduced several notions of identifiability for collections of causal diagrams. These notions are not independent, however, but instead form a hierarchy: some forms of identifiability imply others in all cases. This hierarchy is strict, with the implication relations and counterexamples establishing its strictness summarized in Figure~\ref{fig:liens_notions_id}. We first establish the implication relations in Section~\ref{subsec:implications}, and then show that their converses fail in general and discuss the resulting cases of incomparability in Section~\ref{subsec:strictness}.

\begin{figure}[t]
    \centering
    \begin{tikzpicture}[  
        implies/.style={-Implies, double, double distance=2.8pt, line width = 1.4pt},
        notimplies/.style={
            -Implies, double, double distance=1.45pt, line width = .8pt,
            postaction={
              decorate,
              decoration={
                markings,
                mark=at position 0.5 with {
                  \draw[-,line width=1.3pt, red] (-4.5pt,-4.5pt) -- (4.5pt,4.5pt);
                  \draw[-,line width=1.3pt, red] (-4.5pt,4.5pt) -- (4.5pt,-4.5pt);
                }
              }
            }
        },
        inner sep=3pt,
        node distance = 3cm,
        scale=1.15, transform shape]

    \node (dots)  {};
    \node (ICD) [right=of dots] {\Large ICD};
    \node (IG)  [right=of ICD]  {\Large IG};
    \node (IGP) [right=of IG]   {\Large IGP};
    \node (ICF) [below=of dots] {\Large ICF};
    \node (ICB) [above=of dots] {\Large ICB};
    \node (ICA) [right =of ICB] {\Large ICA};
    
    \draw[implies] (IG) -- (IGP);
    \draw[implies] (ICD) -- (IG);
    \draw[implies] (ICB) -- (ICA);
    \path (ICA) edge[notimplies, bend right=35,looseness = 1.2] (ICB);
    \path (ICA) edge[implies] (IG);
    \path (ICA) edge[notimplies, bend left=20,looseness = 1.2] (ICD);
    \path (ICB) edge[implies] (ICD);
    \path (ICF) edge[implies] (ICD);
    \path (IGP) edge[notimplies, bend right = 35, looseness = 1.2] (IG);
    \path (ICD) edge[notimplies, bend left =20,looseness = 1.2] (ICA);
    \path (ICD) edge[notimplies, bend left = 30] (ICB);
    \path (ICD) edge[notimplies, bend right =30] (ICF);
    \path (IG) edge[notimplies, bend right=35,looseness = 1.2] (ICA);
    \path (IG) edge[notimplies, bend left =35, looseness = 1.2] (ICD);
    \end{tikzpicture}
    \caption{Hierarchy between the notions of identifiability. 
    An arrow indicates that the implication holds in every case. 
    A red cross on an arrow indicates that the implication fails in at least one case. 
    }

    \label{fig:liens_notions_id}
\end{figure}

\subsection{The Hierarchy of Identifiability}
\label{subsec:implications}

We now establish the implications underlying the hierarchy. We first consider the relationship between IG and IGP. Since IGP corresponds to IG with additional knowledge, we know that IG always implies IGP, as formalized by Proposition~\ref{prop:IG_implique_IGP}.

\begin{restatable}[$IG \Rightarrow IGP$]{proposition}{propIGimpliqueIG}
\label{prop:IG_implique_IGP}
    Let \(\mathcal{C}\) be a class of causal diagrams over the variable set \(\mathcal{V}\), let $P^\star$ be a distribution over $\mathcal V$, and let \(\mathbf{X}, \mathbf{Y}, \mathbf{Z} \subseteq \mathcal{V}\) be pairwise disjoint with \(\mathbf{Y} \neq \emptyset\). If the causal query \(P\bigl(\mathbf{y}\mid \Do(\mathbf{x}),\,\mathbf{z}\bigr)\) is identifiable through graphs in \(\mathcal{C}\), then it is identifiable through graphs knowing $P^\star$ in $\C$.
\end{restatable}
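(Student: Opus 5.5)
The argument is a direct unwinding of Definitions~\ref{def:IG} and~\ref{def:IGP}: IGP quantifies over a subset of the SCMs that IG quantifies over, so any witness for IG is also a witness for IGP. No graphical or do-calculus reasoning is needed.

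First, assume $P\bigl(\mathbf{y}\mid\Do(\mathbf{x}),\mathbf{z}\bigr)$ is identifiable through graphs in $\C$. By Definition~\ref{def:IG}, there is an identification functional $\phi$ that is valid for the query in every SCM $\M$ with $\G_{\M}\in\C$, that is,
\[
\phi\bigl(P_{\M}(\V)\bigr)=P_{\M}\bigl(\mathbf{y}\mid\Do(\mathbf{x}),\mathbf{z}\bigr).
\]
Next, define
\[
\mathfrak{M}_{\C}\coloneqq\{\M : \G_{\M}\in\C\}
\quad\text{and}\quad
\mathfrak{M}_{\C,P^\star}\coloneqq\{\M : \G_{\M}\in\C,\ P_{\M}(\V)=P^\star(\V)\}.
\]
Clearly $\mathfrak{M}_{\C,P^\star}\subseteq\mathfrak{M}_{\C}$. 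Finally, take any $\M\in\mathfrak{M}_{\C,P^\star}$. It also lies in $\mathfrak{M}_{\C}$, so $\phi$ is valid for the query in $\M$. Hence the same $\phi$ witnesses the existential in Definition~\ref{def:IGP}, which is exactly the claim. The choice of $P^\star$ plays no role.

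There is essentially no obstacle. The one point to check is a degenerate case: if no SCM with graph in $\C$ has observational distribution $P^\star$, then $\mathfrak{M}_{\C,P^\star}$ is empty. In that case IGP holds vacuously, and the argument above still applies, since it never uses non-emptiness. I would also note that "valid" is read on the natural domain of definition, as the paper stipulates. This convention is the same in both definitions, so restricting to a subclass of SCMs preserves it unchanged.
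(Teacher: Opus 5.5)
Your proposal is correct and is essentially the paper's own argument: the paper likewise observes that $\{\mathcal M \mid \G_{\mathcal M}\in\C,\ P_{\mathcal M}=P^\star\}\subseteq\{\mathcal M \mid \G_{\mathcal M}\in\C\}$, so the IG witness $\phi$ remains valid on the smaller class and therefore witnesses IGP. Your remark on the vacuous case, where no SCM with graph in $\C$ reproduces $P^\star$, is a harmless extra that the paper leaves implicit.
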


\begin{proof}
    Assume that IG holds for
    $
    P(\mathbf{y}\mid \Do(\mathbf{x}), \mathbf z)
    $
    over $\mathcal{C}$.
    Then there exists an identification functional $\phi$ such that for every SCM $\mathcal{M}$ satisfying
    $
    \mathcal{G}_{\mathcal{M}}\in\mathcal{C},
    $
    \(
        P_{\mathcal{M}}(\mathbf{y}\mid \Do(\mathbf{x}),\mathbf z)
        =
        \phi(P_{\mathcal{M}}).
    \)
    Since $\{\mathcal M \mid \G_{\mathcal M} \in \mathcal C, P_{\mathcal M} = P^\star\} \subseteq \{\mathcal M \mid \G_{\mathcal M} \in \mathcal C\}$, $\phi$ is valid for all SCM $\mathcal M$ such that $\G_{\mathcal M} \in \mathcal C$ and such that $P_{\mathcal M} = P^\star$.
    Hence IGP holds.
\end{proof}

Similarly, because ICD requires a single causal derivation to hold simultaneously across all causal diagrams in the class, ICD implies IG, as stated by Proposition~\ref{prop:ICD_implique_IG}.

\begin{proposition}[$ICD \Rightarrow IG$]
\label{prop:ICD_implique_IG}
    Let \(\mathcal{C}\) be a class of causal diagrams over the variable set \(\mathcal{V}\), and let \(\mathbf{X}, \mathbf{Y}, \mathbf{Z} \subseteq \mathcal{V}\) be pairwise disjoint with \(\mathbf{Y} \neq \emptyset\). If the causal query \(P\bigl(\mathbf{y}\mid \Do(\mathbf{x}),\,\mathbf{z}\bigr)\) is identifiable by common derivation in \(\mathcal{C}\), then it is identifiable through graphs in $\C$.
\end{proposition}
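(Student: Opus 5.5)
The plan is to extract the identification functional directly from the common derivation and then check that it is valid for every SCM whose causal diagram lies in $\C$, using soundness of do-calculus graph by graph. By the ICD assumption (Definition~\ref{def:ICD}), there is a single finite causal derivation $D$, i.e.\ a sequence of do-calculus rule applications and standard probability manipulations, that holds in every $\G \in \C$. This derivation rewrites $P(\mathbf{y}\mid\Do(\mathbf{x}),\mathbf{z})$ into a do-free expression. Because the derivation is a single syntactic object, its final do-free expression defines one map $\phi$ from joint distributions over $\V$ to $\mathbb{R}^+$, and $\phi$ does not depend on which graph of $\C$ we consider. This $\phi$ is the candidate functional for Definition~\ref{def:IG}.

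Next, I would fix an arbitrary SCM $\M$ with $\G_\M \in \C$. Its observational distribution is compatible with $\G_\M$ by definition, and each intervened model $\M_{\mathbf{x}}$ induces a mutilated diagram. I would then argue by induction along the steps of $D$ that each expression in the sequence, evaluated in $\M$, equals the previous one.
\begin{itemize}
\item For a do-calculus step, the required d-separation holds in $\G_\M$ because $D$ holds in every graph of $\C$. Theorem~\ref{th:do_calculus} then guarantees that the equality is true for $P_\M$ and its interventional distributions.
\item For a probability manipulation (marginalisation, chain rule, Bayes), the equality holds in any distribution on its natural domain, so it holds for the relevant interventional distribution of $\M$.
\end{itemize}
Chaining these equalities gives $P_\M(\mathbf{y}\mid\Do(\mathbf{x}),\mathbf{z}) = \phi(P_\M(\V))$. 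Since $\M$ was arbitrary, $\phi$ is valid in every SCM with diagram in $\C$, which is IG.

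The only delicate point is the do-calculus step. Theorem~\ref{th:do_calculus} is stated for distributions compatible with $\G$, but here it must apply to the interventional quantities of a specific SCM $\M$. I would handle this by noting that the rules are statements about $\M$'s interventional distributions whenever $\G_\M = \G$, which is how Pearl proves them via the augmented or mutilated model. I would also mention that domain-of-definition issues (positivity) are covered by the paper's convention that equalities are read on their natural domains. Apart from this, the argument is bookkeeping.
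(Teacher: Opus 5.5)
Your proposal is correct and follows the same route as the paper: the do-free expression produced by the single common derivation defines one functional $\phi$, and because every step of that derivation holds in $\G_{\M}$ for any SCM $\M$ with $\G_{\M}\in\C$, soundness of do-calculus gives $P_{\M}(\mathbf{y}\mid\Do(\mathbf{x}),\mathbf{z})=\phi(P_{\M}(\V))$. You make explicit the step-by-step soundness argument, including the remark that the rules apply to the interventional distributions of the specific SCM, which the paper leaves inside a single ``consequently''.
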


\begin{proof}
    Assume that ICD holds for $P(\mathbf{y}\mid \Do(\mathbf{x}),\mathbf z)$ over $\mathcal{C}$. By definition, there exists a single do-calculus derivation that holds for every graph in $\mathcal{C}$ and produces an identification formula
    $
    \mathcal{F}(P).
    $
    Consequently, for every SCM $\mathcal{M}$ such that
    $
    \mathcal{G}_{\mathcal{M}}\in\mathcal{C},
    $
    we have
    \(
        P_{\mathcal{M}}(\mathbf{y}\mid \Do(\mathbf{x}),\mathbf z)
        =
        \mathcal{F}(P_{\mathcal{M}}).
    \)
    Therefore, the identification functional $(P_{\mathcal{M}} \mapsto \mathcal{F}(P_{\mathcal{M}}))$ is valid for all $\mathcal M$ such that $\G_{\mathcal M} \in \mathcal C$. Hence IG holds.
\end{proof}

Similarly, methods that consist of applying a graphical criterion uniformly across all graphs in the collection (i.e., ICB, ICA, and ICF) imply either IG or ICD. Specifically, ICB and ICF imply ICD, while ICA implies IG. Theorem~\ref{th:liens_notions_id} summarizes all implication relations between the identifiability notions introduced above.

\begin{restatable}[Hierarchy of Identifiability Notions]{theorem}{theoremhierarchy}
\label{th:liens_notions_id}
    Let $\mathcal{C}$ be a class of causal diagrams over $\mathcal{V}$, and let $\mathbf{X},\mathbf{Y},\mathbf{Z}\subseteq\mathcal{V}$ be pairwise disjoint with \(\mathbf{Y} \neq \emptyset\). 
    The implications between identifiability notions summarized in Figure~\ref{fig:liens_notions_id} are correct.
\end{restatable}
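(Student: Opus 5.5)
Theorem~\ref{th:liens_notions_id} only asserts the positive (double-arrow, uncrossed) implications of Figure~\ref{fig:liens_notions_id}; the crossed arrows are strictness claims treated later. The positive arrows are IG $\Rightarrow$ IGP, ICD $\Rightarrow$ IG, ICB $\Rightarrow$ ICA, ICA $\Rightarrow$ IG, ICB $\Rightarrow$ ICD and ICF $\Rightarrow$ ICD. The plan is to check each one separately. The first two are Propositions~\ref{prop:IG_implique_IGP} and~\ref{prop:ICD_implique_IG}, so only the four arrows involving common graphical criteria remain.

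\textbf{ICB $\Rightarrow$ ICA.} I will show that, for singletons $X,Y$, every backdoor set $\mathbf Z$ in $\G$ satisfies the generalized adjustment criterion of Definition~\ref{def:critere_ajustement}. The first condition holds because $\Forb(X,Y,\G)\subseteq\Desc(X,\G)$ and $\mathbf Z$ contains no descendant of $X$. For the second, take a proper non-causal path from $X$ to $Y$. If it starts with an arrowhead into $X$, then $\mathbf Z$ blocks it by the backdoor criterion. If it starts with $X\rightarrow$, then, being non-causal, it must contain a collider $C$ that is a descendant of $X$. Since $\mathbf Z$ contains no descendant of $X$, no descendant of $C$ lies in $\mathbf Z$, so $C$ blocks the path. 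Applying this in every $\G\in\C$ gives a common adjustment set, and hence a common formula.

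\textbf{ICA $\Rightarrow$ IG.} If $\mathbf Z$ satisfies the adjustment criterion in every $\G\in\C$, then Theorem~\ref{th:adjustment_perkovic} makes the single functional $\phi(P)=\sum_{\mathbf z}P(\mathbf y\mid\mathbf x,\mathbf z)P(\mathbf z)$ (or $P(\mathbf y\mid\mathbf x)$ when $\mathbf Z=\emptyset$) valid for every $P$ Markovian w.r.t.\ $\G$. Every SCM $\M$ with $\G_\M\in\C$ induces a distribution that is compatible with, hence Markovian with respect to, $\G_\M$. So $\phi$ is valid for all such $\M$, which is IG.

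\textbf{ICB $\Rightarrow$ ICD and ICF $\Rightarrow$ ICD.} Here the derivations must be shown to hold uniformly, not merely to output the same formula. For ICB I will write the derivation of Example~\ref{ex:identification_avec_derivation} with the fixed set $\mathbf Z$: marginalization, chain rule, $\mathcal R3$ to turn $P(\mathbf z\mid\Do(x))$ into $P(\mathbf z)$, then $\mathcal R2$ to turn $P(y\mid\Do(x),\mathbf z)$ into $P(y\mid x,\mathbf z)$. I then check that the two d-separations follow from the backdoor conditions in any graph:
\begin{itemize}
\item $\mathbf Z\ind X$ in $\G_{\overline X}$ holds because $\mathbf Z$ contains no descendant of $X$, and in $\G_{\overline X}$ every path leaving $X$ starts $X\rightarrow$ and must hit a collider that is a descendant of $X$.
\item $Y\ind X\mid\mathbf Z$ in $\G_{\underline X}$ holds because the remaining paths are exactly the backdoor paths, which $\mathbf Z$ blocks. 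The colliders opened by $\mathbf Z$ do not matter, since no element of $\mathbf Z$ descends from $X$.
\end{itemize}
Since the sequence of rules is identical for all $\G\in\C$ and each condition holds in each graph, the derivation is common.

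For ICF I will take Pearl's standard do-calculus derivation of the frontdoor formula, which uses steps $P(\mathbf z\mid\Do(x))=P(\mathbf z\mid x)$, $P(y\mid\Do(\mathbf z))=\sum_{x'}P(y\mid \mathbf z,x')P(x')$ and $P(y\mid\Do(x),\mathbf z)=P(y\mid\Do(x,\mathbf z))=P(y\mid\Do(\mathbf z))$. I will verify that each d-separation it invokes follows from the three frontdoor conditions in an arbitrary ADMG, so the same derivation holds in every $\G\in\C$.

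The main obstacle is this last verification. The frontdoor derivation involves several rule applications on mutilated graphs (for instance $\G_{\overline X\,\overline{\mathbf Z}}$ for $\mathcal R3$ and $\G_{\underline{\mathbf Z}}$ for $\mathcal R2$). Each must be derived purely from the criterion, handling bidirected edges and the positivity condition $P(x,\mathbf z)>0$ carefully, so that it does not rely on any particular graph.
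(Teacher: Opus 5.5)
Your proposal is correct and follows the paper's proof arrow by arrow. It uses Propositions~\ref{prop:IG_implique_IGP} and~\ref{prop:ICD_implique_IG}, Theorem~\ref{th:adjustment_perkovic} for ICA $\Rightarrow$ IG, and the standard backdoor and frontdoor do-calculus derivations for ICB $\Rightarrow$ ICD and ICF $\Rightarrow$ ICD; your ICB arguments, including an actual proof that a backdoor set satisfies the adjustment criterion, are more explicit than the paper's one-line appeals. The frontdoor check you flag as the main obstacle is routine and is what the paper does in one line per step: frontdoor condition~2 gives $P(\mathbf z\mid\Do(x))=P(\mathbf z\mid x)$ and $P(x'\mid\Do(\mathbf z))=P(x')$; condition~3 gives both $\mathcal R2$ exchanges, since blocking in $\G$ persists in every edge-deleted mutilation; and condition~1 gives the $\mathcal R3$ removal of $\Do(x)$, because in $\G_{\overline{\mathbf Z}\,\overline{X}}$ no directed path from $X$ to $Y$ survives and no collider lies in, or has a descendant in, $\mathbf Z$.
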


\begin{proof}[Proof Sketch]
    The implications $IG \Rightarrow IGP$ and $ICD \Rightarrow IG$ follow directly from
    Propositions~\ref{prop:IG_implique_IGP} and~\ref{prop:ICD_implique_IG},
    respectively. The implications $ICB \Rightarrow ICD$ and $ICF \Rightarrow ICD$
    follow from the fact that backdoor and frontdoor adjustment admit a common
    do-calculus derivation~\citep{pearl_causality_2009}, which can therefore be
    applied uniformly across all graphs in $\mathcal{C}$. Finally, $ICB \Rightarrow ICA$
    follows from the fact that the backdoor criterion is a special case of
    Definition~\ref{def:critere_ajustement}, while $ICA \Rightarrow IG$ follows from
    Theorem~\ref{th:adjustment_perkovic}. Full details are given in Appendix~\ref{sec:proof:prop}.
\end{proof}

\subsection{Strictness of the Hierarchy}
\label{subsec:strictness}

The hierarchy presented in Theorem~\ref{th:liens_notions_id} is strict. More specifically, the following result holds.

\begin{restatable}[Strictness of the Hierarchy]{proposition}{propstrictness}
\label{prop:strictness}
    None of the converse implications in Figure~\ref{fig:liens_notions_id} holds in general, meaning that there are counterexamples to each of them. Moreover, ICA and ICD are incomparable, meaning that there are counterexamples in both directions.
\end{restatable}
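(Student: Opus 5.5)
We would prove the statement by exhibiting, for each crossed arrow in Figure~\ref{fig:liens_notions_id}, an explicit class $\C$ (often a single graph, or two graphs) on a few variables and a query $P(y\mid\Do(x))$. The required non-implications are ICA$\not\Rightarrow$ICB, ICA$\not\Rightarrow$ICD, ICD$\not\Rightarrow$ICA, ICD$\not\Rightarrow$ICB, ICD$\not\Rightarrow$ICF, IG$\not\Rightarrow$ICA, IG$\not\Rightarrow$ICD and IGP$\not\Rightarrow$IG. Incomparability of ICA and ICD is then the conjunction of the two ICA/ICD items. Several items can share examples through the established implications of Theorem~\ref{th:liens_notions_id}: a counterexample to ICA$\Rightarrow$ICD is automatically one to IG$\Rightarrow$ICD, since ICA$\Rightarrow$IG.

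\textbf{Single-graph examples.} For ICD$\not\Rightarrow$ICA, ICD$\not\Rightarrow$ICB and IG$\not\Rightarrow$ICA, we use the frontdoor graph $X\rightarrow M\rightarrow Y$, $X\dashleftrightarrow Y$, taken as $\C=\{\G\}$. The effect is identified by Theorem~\ref{th:frontdoor_adjustment} through a do-calculus derivation, so ICF, ICD and IG hold. No adjustment set exists: $M$ is forbidden, and $\emptyset$ leaves $X\dashleftrightarrow Y$ open. Hence ICA fails, and therefore so does ICB. For ICD$\not\Rightarrow$ICF, we take the graph of Example~\ref{ex:identification_avec_derivation}: ICD holds there, but the direct edge $X\rightarrow Y$ cannot be intercepted by any $\mathbf Z\not\ni X,Y$, so the frontdoor criterion fails.

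\textbf{ICA$\not\Rightarrow$ICB.} We take $\C=\{\G_1,\G_2\}$ on $\{X,Y,Z\}$ with $\G_1\colon Z\rightarrow X\rightarrow Y,\ Z\rightarrow Y$ and $\G_2\colon Z\leftarrow X\rightarrow Y$. The set $\{Z\}$ satisfies Definition~\ref{def:critere_ajustement} in both graphs. In $\G_2$, $Z$ is a descendant of $X$ but is not on a proper causal path to $Y$, so it is not forbidden, and there are no non-causal paths. For the backdoor criterion, $Z$ is excluded in $\G_2$ as a descendant of $X$, and $\emptyset$ fails in $\G_1$, so no common backdoor set exists.

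\textbf{IGP$\not\Rightarrow$IG.} We take $\C=\{X\rightarrow Y,\ X\dashleftrightarrow Y\}$ and $P^\star$ with $X\ind Y$. Every SCM compatible with either graph and with $P^\star$ satisfies $P(y\mid\Do(x))=P(y)$. In the first graph this holds because $P(y\mid\Do(x))=P(y\mid x)=P(y)$; in the second, $Y$ is not a descendant of $X$. So $\phi=P(y)$ witnesses IGP. For IG, any valid $\phi$ would have to equal $P(y\mid x)$ on the first graph and $P(y)$ on the second. A distribution with $X\notind Y$ is compatible with both graphs, and the two values differ there, so IG fails.

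\textbf{ICA$\not\Rightarrow$ICD, and hence IG$\not\Rightarrow$ICD.} We use $\G_1\colon Z\leftarrow X\rightarrow Y$ and $\G_2\colon X\leftarrow Z\rightarrow Y$. The set $\{Z\}$ is a common adjustment set, so ICA holds and therefore IG holds. Showing that no common derivation exists is the main obstacle, because it quantifies over all derivations. The strategy is to find a third ADMG $\G_3$ in which the query is not identifiable, but in which every do-calculus rule instance whose d-separation premise holds in both $\G_1$ and $\G_2$ also holds. A common derivation would then hold in $\G_3$ and identify the query there, a contradiction. The first candidate is $\G_3\colon Z\rightarrow Y,\ X\rightarrow Y,\ X\dashleftrightarrow Y$, possibly with an edge involving $Z$ adjusted. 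There are only three variables, so the verification reduces to a finite enumeration. For every choice of intervened set, conditioning set and rule, we check that whenever the premise holds in the relevant mutilations of both $\G_1$ and $\G_2$, it also holds in $\G_3$. If this candidate fails, we would search among small ADMGs containing an $X\dashleftrightarrow Y$ edge until one passes. That search is where we expect most of the effort to go.
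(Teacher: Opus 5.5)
Your peripheral separations are fine. In fact, your two-graph ICA$\not\Rightarrow$ICB example is sturdier than the paper's single graph $Z\leftarrow X\rightarrow Y$, where $\emptyset$ already satisfies the backdoor criterion. The gap is in the central item, ICA$\not\Rightarrow$ICD. Everything hinges on exhibiting a concrete $\G_3$ with $\mathcal D[\G_1]\cap\mathcal D[\G_2]\subseteq\mathcal D[\G_3]$, and the search space you designate contains no such graph. The $\mathcal R2$ step $P(y\mid\Do(x),z)=P(y\mid x,z)$ is common to $\G_1$ and $\G_2$. Its premise $Y\ind X\mid Z$ in $\G_{\underline{X}}$ holds in $\G_1$, where $X$ becomes isolated, and in $\G_2$, where $X\leftarrow Z\rightarrow Y$ is blocked by $Z$. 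But $\G_{\underline{X}}$ keeps all bidirected edges, so this premise fails in every graph containing $X\dashleftrightarrow Y$. Your first candidate, its variants with modified $Z$-edges, and the whole fallback family of ADMGs with $X\dashleftrightarrow Y$ are therefore excluded. The search you expected to carry most of the effort would never succeed.

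The strategy itself works once the non-identifiability is moved off the $X$--$Y$ pair. Take $\G_3'\colon X\rightarrow Z\rightarrow Y$ together with $X\dashleftrightarrow Z$, and let $U\sim\mathcal B(1/2)$ be shared by $X$ and $Z$. Consider the two models $X:=U,\ Z:=X,\ Y:=Z$ and $X:=U,\ Z:=U,\ Y:=Z$, with $Z$'s mechanism formally taking $(X,U)$ as arguments in both. Both induce $\G_3'$ and the same observational law, but they give $P(Y=1\mid\Do(X=1))$ equal to $1$ and $\tfrac12$, so the query is not identifiable in $\G_3'$.

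Now enumerate the finitely many rule instances on $\{X,Y,Z\}$. The instances valid in both $\G_1$ and $\G_2$ are exactly two kinds. The first is the $\mathcal R2$ steps among $P(y\mid\Do(x,z))$, $P(y\mid\Do(x),z)$, $P(y\mid\Do(z),x)$ and $P(y\mid x,z)$. The second is the $\mathcal R3$ steps deleting $\Do(y)$ from queries on $x$ and/or $z$. All of these hold in $\G_3'$: $Y$ is a sink, and the only nontrivial path, $X\dashleftrightarrow Z\rightarrow Y$ in $\G'_{3,\underline{X}}$, is blocked by $Z$. A common derivation would then hold in $\G_3'$ and, by soundness, identify the query there, which is a contradiction.

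The paper takes a different route. It uses $\G_3\colon X\rightarrow Z\rightarrow Y$, where the query \emph{is} identifiable. It derives the contradiction from IG failing in $\{\G_1,\G_2,\G_3\}$: $\G_3$ and $\G_2$ are Markov equivalent but assign different values to $P(y\mid\Do(x))$, which requires realizing a single distribution by SCMs on both graphs. Your variant needs only a single-graph non-identifiability witness. The price is a stronger inclusion to verify, since $\mathcal D[\G_3']\subseteq\mathcal D[\G_3]$.
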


We establish Proposition~\ref{prop:strictness} by considering the relevant separations individually. Most of them follow from existing counterexamples in the single-graph setting or from straightforward constructions. The separation between IG and ICD, however, requires a substantially more involved construction and constitutes the main technical contribution of this section. We first establish the simpler separations and then turn to this more challenging case. The formal proof, including all technical details, is provided in Appendix~\ref{proof:strictness}.

\subsubsection{Straightforward Separations}
\label{subsubsec:simple_sep}

We first establish the separations that follow directly from known results in the single-graph setting or from simple constructions.
In the single-graph setting, the back-door criterion, the front-door criterion, and generalized adjustment are not complete for causal effect identification, whereas do-calculus is complete \citep{pearl_causality_2009, Perkovic_2015}. These limitations immediately carry over to collections of causal diagrams, since the single-graph setting is recovered by considering collections consisting of a single diagram. Consequently, the corresponding converse implications in Figure~\ref{fig:liens_notions_id} do not hold in general. The formal constructions are given in Appendix~\ref{proof:strictness}; they can also be obtained directly from the standard counterexamples for the incompleteness of these criteria.

We next consider the converse implication \(IGP \Rightarrow IG\). Since IGP incorporates additional information about the data-generating distribution, it can identify effects that are not identifiable from the graphical structure alone. 

\begin{restatable}[Failure of \(IGP \Rightarrow IG\)]{example}{failureIGPfIG}
\label{ex:failureIGPfIG}
    Consider the collection of causal diagrams \(\C \coloneqq \{X \rightarrow Y\,; X \leftarrow Y\}.\)
    Suppose that the observed variables \(X\) and \(Y\) are independent.
    Then \(P(y\mid \Do(x))\) is identifiable from \(\C\) knowing the data density \(P^\star\), although it is not identifiable from \(\C\) alone. Thus, IGP does not imply IG. All details are provided in Appendix~\ref{sec:proof:simple_sep}.
\end{restatable}

The example is deliberately simple, as our definitions allow the observed distribution \(P^\star\) to provide information beyond the conditional independences encoded by the graph. More elaborate instances of the same phenomenon can be obtained by imposing additional distributional constraints. In that direction, \citet{tikka2019identifying} show that context-specific independence relations can provide information beyond the conditional independences encoded by a causal graph, and can turn an otherwise non-identifiable causal effect into an identifiable one. Thus, this separation between graphical and distributional information already occurs in the single-graph setting. The example above illustrates the same underlying mechanism in its simplest form: information contained in \(P^\star\) can rule out causal models that remain compatible with the graphical information alone.

\subsubsection{ICA, ICD and IG}
\label{subsubsec:ICDvsIG}

Historically, the relationship between ICD and IG has been an open question, particularly in the context of causal abstractions such as ancestral graphs PAGs. \citet{JMLR:v9:zhang08a} first extended do-calculus to ancestral graphs, providing a framework for identifiability by common derivation within such classes. However, their work left unresolved whether IG could be strictly more general than ICD. This gap was later addressed by \citet{jaber_causal_2024}, who proved that in the specific case of PAGs, ICD and IG are in fact equivalent. Their result established that, for PAGs, the existence of a common causal derivation is both necessary and sufficient for identifiability through graphs.

Yet, this equivalence does not hold in full generality. In this section, we demonstrate that  IG and ICD are distinct notions. We begin by showing the failure of the implication from ICA to ICD, thereby illustrating that this graphical criterion do not guarantee a common derivation. To formalize this separation, we introduce derivation graphs from \citet{yvernes2026unveiling}, as a tool to characterize the structure of possible causal derivations.

\begin{definition}[Derivation Graph \citep{yvernes2026unveiling}]
    Let $\mathcal{G} = (\mathcal{V}_\mathcal{G}, \mathcal{E}_\mathcal{G})$ be an ADMG. 
    Its \emph{derivation graph} is defined as the undirected graph $\mathcal{D}[\mathcal{G}] = (\mathcal{V}_\mathcal{D}, \mathcal{E}_\mathcal{D})$, where:
    \begin{itemize}
        \item The set of vertices $\mathcal{V}_\mathcal{D}$ consists of all expressions of the form $P(\mathbf{y} \mid \Do(\mathbf{x}), \mathbf{w})$, with pairwise disjoint subsets  $(\mathbf{Y}, \mathbf{X},  \mathbf{W}) \subseteq \mathcal{V}_\mathcal{G}$, with $\mathbf{Y} \neq \emptyset$.
        \item There is an edge $(Q_1, Q_2) \in \mathcal{E}_\mathcal{D}$ if $Q_2$ can be obtained from $Q_1$ by applying a single do-calculus rule  that is valid in $\mathcal{G}$.
    \end{itemize}
\end{definition}

\begin{figure}[t]
    \centering
    \begin{tikzpicture}[
        dofree/.style = {
          draw=coolorange!60!black,
          fill=coolorange!80,
          line width=0.6pt,
          inner sep=1pt,
          font=\scriptsize
        },
        withdo/.style = {
          draw=blue!60!black,
          fill=blue!25,
          line width=0.6pt,
          inner sep=1pt,
          font=\scriptsize
        },
        styleR1/.style = {thick, -, gray, line width=1.2pt},
        styleR2/.style = {thick, -, orange!60!brown, dashed, line width=1.2pt},
        styleR3/.style = {thick, -, blue!55, dotted, line width=1.5pt},
        scale=3
    ]
    
    \node[dofree] (n0) at (0.000, 0.475) {$\mathrm{P(x \mid z)}$};
    \node[withdo] (n1) at (0.000, -.475) {$\mathrm{P(x \mid \Do(y,z))}$};
    \node[withdo] (n2) at (-.475, 0.000) {$\mathrm{P(x \mid \Do(y), z)}$};
    \node[withdo] (n3) at (0.475, 0.000) {$\mathrm{P(x \mid \Do(z))}$};
    
    \node[withdo] (n4) at (2.175, 0.000) {$\mathrm{P(y \mid \Do(x), z)}$};
    \node[withdo] (n5) at (1.700, -.475) {$\mathrm{P(y \mid \Do(x,z))}$};
    \node[withdo] (n6) at (1.225, 0.000) {$\mathrm{P(y \mid \Do(z), x)}$};
    \node[dofree] (n7) at (1.700, 0.475) {$\mathrm{P(y \mid x,z)}$};
    
    \node[withdo] (n8)  at (3.875, 0.000) {$\mathrm{P(z \mid \Do(y))}$};
    \node[withdo] (n9)  at (2.925, 0.000){$\mathrm{P(z \mid \Do(x))}$};
    \node[withdo] (n10) at (3.400, -.475) {$\mathrm{P(z \mid \Do(x), y)}$};
    \node[dofree] (n11) at (3.400, 0.475) {$\mathrm{P(z)}$};
    
    \node[dofree] (n12) at (-0.350,-1.1) {$\mathrm{P(x)}$};
    \node[withdo] (n13) at (0.350,-1.1) {$\mathrm{P(x \mid \Do(y))}$};
    
    \node[withdo] (n14) at (1.350,-1.1) {$\mathrm{P(x \mid \Do(z), y)}$};
    \node[dofree] (n15) at (2.050,-1.1) {$\mathrm{P(x \mid y,z)}$};
    
    \node[withdo] (n16) at (3.050,-1.1) {$\mathrm{P(y \mid \Do(z))}$};
    \node[dofree] (n17) at (3.750,-1.1) {$\mathrm{P(y \mid z)}$};
    
    \node[dofree] (n18) at (-0.350, -1.775) {$\mathrm{P(z \mid x)}$};
    \node[withdo] (n19) at (0.350, -1.775) {$\mathrm{P(z \mid \Do(y), x)}$};
    
    \node[withdo] (n20) at (1.350, -1.775) {$\mathrm{P(x,y \mid \Do(z))}$};
    \node[dofree] (n21) at (2.050, -1.775) {$\mathrm{P(x,y \mid z)}$};
    
    \node[withdo] (n22) at (3.050, -1.775) {$\mathrm{P(x,z \mid \Do(y))}$};
    \node[dofree] (n23) at (3.750, -1.775) {$\mathrm{P(x,z)}$};
    
    
    \draw[styleR2] (n1) -- (n2);
    \draw[styleR3] (n1) -- (n3);
    \draw[styleR3] (n2) -- (n0);
    \draw[styleR3] (n13) -- (n12);
    \draw[styleR2] (n14) -- (n15);
    \draw[styleR2] (n3) -- (n0);
    \draw[styleR2] (n5) -- (n6);
    \draw[styleR2] (n5) -- (n4);
    \draw[styleR2] (n4) -- (n7);
    \draw[styleR2] (n6) -- (n7);
    \draw[styleR2] (n16) -- (n17);
    \draw[styleR3] (n10) -- (n8);
    \draw[styleR3] (n10) -- (n9);
    \draw[styleR3] (n9) -- (n11);
    \draw[styleR3] (n19) -- (n18);
    \draw[styleR3] (n8) -- (n11);
    \draw[styleR2] (n20) -- (n21);
    \draw[styleR3] (n22) -- (n23);
    \end{tikzpicture}
    \caption{Derivation graph of the DAG from Example~\ref{ex:identification_avec_derivation}.
    Blue nodes denote interventional queries, while orange nodes denote do-free queries.
    Edges of the derivation graph represent valid applications of the do-calculus rules in the DAG: gray 
    solid lines for Rule $\mathcal{R}1$, orange dashed lines
    for Rule $\mathcal{R}2$, and blue dotted lines for Rule $\mathcal{R}3$. Only vertices that are non-isolated are displayed.}
    \label{fig:dg2}
\end{figure}

The derivation graph takes the causal diagram as its starting point and characterizes the do-calculus rules that hold in it. Figure~\ref{fig:dg2} provides an example of a derivation graph, illustrating how the graphical structure determines which transformations are permitted. 

We now reverse this perspective: rather than starting from a graph and deriving the rules that hold, we start from a given causal derivation and ask which graphical conditions are required for each of its individual steps to hold. This shift in perspective allows us to characterize the graphical conditions associated with each step of a derivation and leads to the notion of a \emph{step graph}.

\begin{definition}[Step Graph of a Derivation]
\label{def:step_graph}
    Let $D$ be a causal derivation. Let $\V$ be the set of all variables represented in $D$. Its \emph{step graph} is defined as the following undirected graph $\mathcal{S}[D] = (\mathcal{V}_\mathcal{S}, \mathcal{E}_\mathcal{S})$, where:
    \begin{itemize}
        \item The set of vertices $\mathcal{V}_\mathcal{D}$ consists of all expressions of the form $P(\mathbf{y} \mid \Do(\mathbf{x}), \mathbf{w})$, with pairwise disjoint subsets  $(\mathbf{Y}, \mathbf{X},  \mathbf{W}) \subseteq \mathcal{V}$, with $\mathbf{Y} \neq \emptyset$, that appear in $D$.
        \item There is an edge $(Q_1, Q_2) \in \mathcal{E}_\mathcal{S}$ if $Q_1$ is rewritten into $Q_2$ by applying a single do-calculus rule in $D$. 
    \end{itemize}
\end{definition}

\begin{example}
\label{ex:step_graph}
    Consider the derivation from Example~\ref{ex:identification_avec_derivation}. This derivation involves the variables $X$, $Y$, and $Z$, and uses $\mathcal{R}3$ to obtain
    $P(z \mid \Do(x)) = P(z)$ and $\mathcal{R}2$ to obtain
    $P(y \mid \Do(x), z) = P(y \mid x, z)$.
    As a result, the nodes of its step graph consist of all causal queries that can be written using ${X,Y,Z}$, and its edges are
    $P(z \mid \Do(x)) \undirectedarrow P(z)$ and
    $P(y \mid \Do(x), z) \undirectedarrow P(y \mid x, z)$.
\end{example}

With these definitions, derivation graphs can be used to characterize the set of valid derivations in any graph $G \in \mathcal{C}$. Conversely, step graphs can be used to characterize the set of graphs in which a given derivation is valid. This duality is formalized in the following theorem.

\begin{theorem}
    \label{thm:derivation_validity}
    Let $\V$ be a set of variables, $\G$ be an ADMG on $\mathcal V$, and $D$ be a derivation on $\mathcal V$. The following holds:
    \[
    D \text{ holds in } \G \quad \text{if and only if} \quad \mathcal{S}[D] \subseteq \mathcal{D}[\G].
    \]
\end{theorem}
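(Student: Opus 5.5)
The plan is to unfold the three definitions (``holds in'', step graph, derivation graph) and check that they describe the same objects, so that the equivalence reduces to set inclusion. First note that $\mathcal{S}[D]$ and $\mathcal{D}[\G]$ live on compatible vertex sets. Since $D$ is a derivation on $\V$ and $\G$ is an ADMG on $\V$, every expression $P(\mathbf y\mid\Do(\mathbf x),\mathbf w)$ appearing in $D$ uses only variables of $\V$. It therefore belongs to $\mathcal{V}_{\mathcal D}$, the set of all such expressions over $\V$. Hence $\mathcal{V}_{\mathcal S}\subseteq\mathcal{V}_{\mathcal D}$ always holds, and the inclusion $\mathcal{S}[D]\subseteq\mathcal{D}[\G]$ (in the subgraph order of the preliminaries) is equivalent to the edge inclusion $\mathcal{E}_{\mathcal S}\subseteq\mathcal{E}_{\mathcal D}$.

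Next I make the edge correspondence explicit. By Definition~\ref{def:step_graph}, an edge $(Q_1,Q_2)\in\mathcal E_{\mathcal S}$ exists exactly when some step of $D$ rewrites $Q_1$ into $Q_2$ by a single do-calculus rule $\mathcal R i$ with some instantiation of the sets $\X,\Y,\Z,\W$. By the definition of $\mathcal{D}[\G]$, $(Q_1,Q_2)\in\mathcal E_{\mathcal D}$ exactly when $Q_2$ is obtained from $Q_1$ by a single rule application valid in $\G$, that is, when the d-separation condition of Theorem~\ref{th:do_calculus} holds in the appropriate mutilated graph of $\G$.

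For ($\Rightarrow$), assume $D$ holds in $\G$. Then every do-calculus step of $D$ has its required d-separation satisfied in $\G$. So each step edge $(Q_1,Q_2)$ is a valid single-rule rewriting in $\G$ and lies in $\mathcal E_{\mathcal D}$. The probability-manipulation steps (chain rule, marginalisation, Bayes) create no edges in $\mathcal S[D]$ and impose no graphical condition, so nothing further needs checking.

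For ($\Leftarrow$), assume $\mathcal E_{\mathcal S}\subseteq\mathcal E_{\mathcal D}$ and take any do-calculus step of $D$ rewriting $Q_1$ into $Q_2$. Its edge lies in $\mathcal{D}[\G]$, so some valid rule application in $\G$ turns $Q_1$ into $Q_2$. We must conclude that the rule instance used in $D$ is itself valid.

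I expect this last point to be the main obstacle, because a pair $(Q_1,Q_2)$ might a priori be related by different rules or different splittings of the sets. I will handle it by showing that the pair $(Q_1,Q_2)$ determines the rule and its instantiation. If only an observed set $\Z$ disappears while the do-set is unchanged, the rule is $\mathcal R1$. If $\Z$ moves from the do-set to the conditioning set, the rule is $\mathcal R2$. If $\Z$ disappears from the do-set, the rule is $\mathcal R3$. In each case $\Y$, $\X$, $\W$ and $\Z$ are read off uniquely from the two expressions.

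The one ambiguity is the direction of application, since edges are undirected. This is harmless because each rule states a symmetric equality under a single condition. So the d-separation required by the step in $D$ coincides with the one witnessing the edge in $\mathcal{D}[\G]$, and every step of $D$ holds in $\G$, as required.
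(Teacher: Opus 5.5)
Your proposal is correct and takes essentially the same approach as the paper, which also proves the equivalence by unfolding the definitions of ``holds in'', step graph and derivation graph. Your two extra checks make explicit what the paper's one-line argument leaves implicit: the vertex-set inclusion, and the fact that a pair $(Q_1,Q_2)$ uniquely determines the rule and the sets $\Y,\X,\Z,\W$, so that an edge of $\mathcal{D}[\G]$ certifies exactly the d-separation condition used by $D$.
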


\begin{proof}
    By construction, $\mathcal{S}[D]$ contains all expressions and do-calculus rules required for the derivation $D$. If $D$ holds in $\G$, then every rule in $D$ is valid in $\G$, hence all edges of $\mathcal{S}[D]$ are present in $\mathcal{D}[\G]$, implying $\mathcal{S}[D] \subseteq \mathcal{D}[\G]$. Conversely, if $\mathcal{S}[D] \subseteq \mathcal{D}[\G]$, then all rules in $D$ are valid in $\G$, hence $D$ holds in $\G$.
\end{proof}

\begin{example}
Let $\G$ be the DAG considered in Example~\ref{ex:identification_avec_derivation}, with derivation graph $\mathcal{D}[\G]$ shown in Figure~\ref{fig:dg2}. Let $D$ be the derivation from the same example. $D$ holds in $\G$, and its step graph $\mathcal{S}[D]$, constructed in Example~\ref{ex:step_graph}, satisfies $$\mathcal{S}[D] \subseteq \mathcal{D}[\G].$$ 
Conversely, consider a derivation $D'$ that applies $\mathcal{R}3$ to obtain
$P(x \mid \Do(z)) = P(x)$.
This rule is not valid in $\G$. Consequently the edge
$P(x \mid \Do(z)) \undirectedarrow P(x)$ is present in the step graph but not in the derivation graph. Hence, $D'$ does not hold in $\G$ and $$\mathcal{S}[D'] \not\subseteq \mathcal{D}[\G].$$
\end{example}

Building on this theorem, we demonstrate the failure of the implication from ICA to ICD.

\begin{restatable}[Failure of ICA $\Rightarrow$ ICD]{proposition}{propICAnotICD}
\label{prop:ICA_not_ICD}
    There exist a causal query $P(\mathbf{y} \mid \Do(\mathbf{x}), \mathbf{z})$ and a class of causal graphs $\mathcal{C}$ such that $P(\mathbf{y} \mid \Do(\mathbf{x}), \mathbf{z})$ is identifiable by common adjustment in $\mathcal{C}$, but not identifiable by common derivation in $\mathcal{C}$.
\end{restatable}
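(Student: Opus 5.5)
We use the class from the example following Definition~\ref{def:IG}: $\C \coloneqq \{\G_1,\G_2\}$ with $\G_1 \coloneqq Z \leftarrow X \rightarrow Y$ and $\G_2 \coloneqq X \leftarrow Z \rightarrow Y$, and the query $P(y \mid \Do(x))$. The proof has two parts: first show ICA holds, then show ICD fails.

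\emph{Step 1: ICA holds with $\{Z\}$.} We check the generalized adjustment criterion (Definition~\ref{def:critere_ajustement}) for $\mathbf Z=\{Z\}$ in each graph.

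In $\G_1$, the only proper causal path from $X$ to $Y$ is $X\to Y$. Hence $\Forb(X,Y,\G_1)=\Desc(Y)=\{Y\}$, which does not contain $Z$. There is no non-causal path from $X$ to $Y$, because $Z$ is a sink.

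In $\G_2$, $Z$ is a non-descendant of $X$ and blocks the only non-causal path $X\leftarrow Z\to Y$.

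Theorem~\ref{th:adjustment_perkovic} then gives $P(y\mid\Do(x))=\sum_z P(y\mid x,z)P(z)$ in every SCM whose diagram is in $\C$.

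\emph{Step 2: ICD fails.} By Theorem~\ref{thm:derivation_validity}, a common derivation $D$ is one whose step graph satisfies $\mathcal S[D]\subseteq \mathcal D[\G_1]\cap\mathcal D[\G_2]$. So we first compute this intersection; with only three variables it can be enumerated by hand.

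The structural remark is that standard probability manipulations (marginalization, chain rule, Bayes) only relate expressions sharing the same intervention set. Therefore the intervention on $X$ can only disappear through a do-calculus edge of the intersection whose source has $X$ in its do-set. We then catalogue these edges.

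\begin{itemize}
\item The edge $P(y\mid\Do(x),z)\undirectedarrow P(y\mid x,z)$ (rule $\mathcal R2$) lies in both derivation graphs.
\item The term $P(z\mid\Do(x))$ is treated differently in the two graphs. In $\G_1$ it is removable only by $\mathcal R2$, giving $P(z\mid x)$. In $\G_2$ it is removable only by $\mathcal R3$, giving $P(z)$.
\item Consequently no edge of the intersection removes $\Do(x)$ from $P(z\mid\Do(x))$. The same holds for any term whose outcome contains $Z$ without conditioning on it, such as $P(y,z\mid\Do(x))$ or $P(z\mid\Do(x),y)$.
\end{itemize}

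The precise list of $\Do(x)$-terms that admit an edge out of the $\Do(x)$-regime in both graphs is obtained from this enumeration.

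\emph{Step 3: semantic obstruction inside the $\Do(x)$-regime.} Call the $\Do(x)$-terms with an edge out of the regime in the intersection ``eliminable''. We expect these to be exactly the conditionals given $Z$, such as $P(y\mid\Do(x),z)$.

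Any common derivation, restricted to the $\Do(x)$-regime, must express $P(y\mid\Do(x))$ by purely probabilistic manipulations from eliminable terms. We refute this by exhibiting two joint distributions $Q,Q'$ over $(Y,Z)$, playing the role of $P(\cdot\mid\Do(x))$. They share the same conditional $Q(y\mid z)=Q'(y\mid z)$ but have different $Z$-marginals, and hence $Q(y)\neq Q'(y)$. Every eliminable term takes the same value under $Q$ and $Q'$, whereas $P(y\mid\Do(x))$ does not. So no sequence of probability manipulations can rewrite $P(y\mid\Do(x))$ into an expression involving only eliminable terms, and therefore no common do-free derivation exists.

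\emph{Main obstacle.} The delicate point is making Step 3 rigorous. Derivations are sequences rather than single edges, so we must argue that probability manipulations in the $\Do(x)$-regime only generate functions of the regime's joint distribution. The values of the eliminable terms, which are functions of $Q(y\mid z)$ only, then never pin down $Q(y)$. The enumeration in Step 2 must also be exhaustive, including terms with $Y$ in the do-set and mixed conditioning sets.
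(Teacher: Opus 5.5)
Your Step 1 matches the paper. Your Steps 2--3 are sound in substance but take a genuinely different route from the paper's.

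\emph{The paper's route.} The paper never builds a countermodel by hand. It exhibits a third graph $\G_3 \coloneqq X \rightarrow Z \rightarrow Y$ with $\mathcal D[\G_1]\cap\mathcal D[\G_2]\subseteq \mathcal D[\G_3]$, checked by exhaustive enumeration, so any common derivation also holds in $\G_3$. Since $\G_2$ and $\G_3$ are Markov equivalent, a distribution $P^\star$ for which $\{Z\}$ fails as an adjustment set in $\G_3$ (Perkovi\'c's linear-Gaussian construction) is realized by SCMs over both graphs. The single functional produced by the derivation would then have to equal two different values of $P(y\mid\Do(x))$. In effect, real SCMs over $\G_2$ and $\G_3$ play the role of your two interpretations, and consistency across all intervention regimes is inherited from soundness of do-calculus.

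\emph{Your route.} You build the two interpretations directly from the edge list of the intersection. This needs no auxiliary graph, no Markov-equivalence or SCM-realizability result, and no continuous densities; binary variables suffice. It also does not depend on some single graph's derivation graph containing the intersection. The price is that the semantics must be made explicit. Each intervention set $S$ and value $s$ must receive a genuine joint distribution over $\V\setminus S$, so that probability manipulations preserve values. Every intersection edge must hold as an equality. An induction on the length of the derivation then concludes.

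\emph{The repair Step 3 needs.} As written, Step 3 confines the argument to the $\Do(x)$ regime, and that is not enough. The intersection contains $P(z\mid\Do(x)) \undirectedarrow P(z\mid\Do(x,y))$, since rule $\mathcal R3$ holds in both graphs. So under your definition $P(z\mid\Do(x))$ is ``eliminable'', yet it differs between $Q$ and $Q'$. The intersection also contains the chain $P(y\mid\Do(x),z) \undirectedarrow P(y\mid\Do(x,z)) \undirectedarrow P(y\mid\Do(z),x) \undirectedarrow P(y\mid x,z)$, which removes $\Do(x)$ through the do-set $\{X,Z\}$ rather than $\{X\}$.

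The fix is to define the countermodel on every regime, with $P$, $Q_x$, $R_z$ strictly positive, for instance:
$P_x(y,z)=Q_x(z)P(y\mid x,z)$, $P_{x,y}(z)=Q_x(z)$, $P_{x,z}(y)=P(y\mid x,z)$, $P_z(x,y)=R_z(x)P(y\mid x,z)$, $P_{y,z}(x)=R_z(x)$, and $P_y(x,z)=P(x,z)$.
One checks that every edge of $\mathcal D[\G_1]\cap\mathcal D[\G_2]$ then holds and the observational regime is fixed. Meanwhile $P(y\mid\Do(x))=\sum_z Q_x(z)P(y\mid x,z)$ varies with $Q_x$ as soon as $P(y\mid x,z)$ depends on $z$, which completes your argument.
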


\begin{proof}[Proof Sketch]
    See Appendix~\ref{sec:proof:ICDvsIG} for a detailed proof. We construct a counterexample in which ICA holds but ICD does not.
    Consider the graphs $\G_1 \coloneqq Z \leftarrow X \rightarrow Y$ and
    $\G_2 \coloneqq X \leftarrow Z \rightarrow Y$, and the class $\C \coloneqq \{ \G_1, \G_2 \}$. We have the following properties:
    \begin{itemize}
        \item $\{Z\}$ satisfies the adjustment criterion in both $\G_1$ and $\G_2$, hence ICA holds in $\C$ for $P(y \mid \Do(x))$.

        \item Consider the graph $\G_3 \coloneqq X \rightarrow Z \rightarrow Y$. One can verify that:
        \[
        \mathcal{D}[\G_1] \cap \mathcal{D}[\G_2] \subseteq \mathcal{D}[\G_3].
        \]
        Assume, by contradiction, that $P(y \mid \Do(x))$ is ICD in $\C$. Then, there exists a derivation $D$ that identifies $P(y \mid \Do(x))$ in all $\G \in \C$. Hence:
        \[
        \mathcal{S}[D] \subseteq \mathcal{D}[\G_1] \cap \mathcal{D}[\G_2] \subseteq \mathcal{D}[\G_3].
        \]
        Therefore, by Theorem~\ref{thm:derivation_validity}, $D$ holds in $\G_3$, hence $P(y \mid \Do(x))$ is ICD, thus IG (by Proposition~\ref{prop:ICD_implique_IG}), in $\{\G_1, \G_2, \G_3\}$.
        However, since $\{Z\}$ does not satisfy the adjustment criterion in $\G_3$, by Theorem~\ref{th:adjustment_perkovic}, there exists a density $P^\star$ Markovian with respect to $\G_3$ such that:
        \[
        P^\star(y \mid \Do(x)) \neq \sum_{z} P^\star(y\mid z,x)P^\star(z).
        \]
        The construction of $P^\star$ yields an SCM $\mathcal{M}_3$ with $P_{\mathcal{M}_3} = P^\star$ and $\G_{\mathcal{M}_3} = \G_3$. Moreover, $P^\star$ is also Markovian with respect to $\G_2$, hence by Proposition~7.1 from \cite{PetJanSch17}, there exists an SCM $\mathcal{M}_2$ with $P_{\mathcal{M}_2} = P^\star$ and $\G_{\mathcal{M}_2} = \G_2$. In $\G_2$, since $\{Z\}$ satisfies the adjustment criterion, we have:
        \[
        P^\star(y \mid \Do(x)) = \sum_{z} P^\star(y\mid z,x)P^\star(z).
        \]
        \textbf{This contradicts the previous inequality}.
        Therefore, $P(y \mid \Do(x))$ cannot be ICD in $\C$.
    \end{itemize}
    Hence, $ICA \Rightarrow ICD$ fails.
\end{proof}

The failure of ICA $\Rightarrow$ ICD also has direct implications for the relationship between IG and ICD. Since IG includes ICA as a special case, the non-implication of ICA to ICD extends directly to IG. This is formalized in Corollary~\ref{cor:IG_not_ICD}.

\begin{corollary}[Failure of IG $\Rightarrow$ ICD]
\label{cor:IG_not_ICD}
    There exists a class of causal diagrams $\mathcal{C}$ and a causal query $P(\mathbf{y} \mid \Do(\mathbf{x}), \mathbf{z})$ such that  $P(\mathbf{y} \mid \Do(\mathbf{x}), \mathbf{z})$ is identifiable through graphs in $\mathcal{C}$, but \emph{not} identifiable by common derivation in $\mathcal{C}$.
\end{corollary}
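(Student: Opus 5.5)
The plan is to reuse the counterexample from Proposition~\ref{prop:ICA_not_ICD} and chain it with the implication $ICA \Rightarrow IG$ from Theorem~\ref{th:liens_notions_id}. Take $\G_1 \coloneqq Z \leftarrow X \rightarrow Y$, $\G_2 \coloneqq X \leftarrow Z \rightarrow Y$, $\C \coloneqq \{\G_1,\G_2\}$, and the query $P(y \mid \Do(x))$ (so the conditioning set in the statement is empty).

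\textbf{Step 1: IG holds.} The set $\{Z\}$ satisfies the adjustment criterion of Definition~\ref{def:critere_ajustement} in both graphs. In $\G_1$, the only proper causal path from $X$ to $Y$ is $X \rightarrow Y$, so $\Forb(X,Y,\G_1)=\{Y\}$, which does not contain $Z$; the only other path $X \rightarrow Z$ does not reach $Y$. In $\G_2$, $\Forb(X,Y,\G_2)=\{Y\}$, and the only proper non-causal path $X \leftarrow Z \rightarrow Y$ is blocked by the non-collider $Z$. By Theorem~\ref{th:adjustment_perkovic}, every $P$ Markovian with respect to $\G_i$ satisfies $P(y\mid\Do(x))=\sum_z P(y\mid x,z)P(z)$. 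Any SCM $\M$ with $\G_\M\in\C$ has a distribution compatible with, hence Markovian with respect to, $\G_\M$. Therefore the single functional $\phi:P\mapsto \sum_z P(y\mid x,z)P(z)$ is valid for every such $\M$, and Definition~\ref{def:IG} is satisfied. This is simply the instance $ICA\Rightarrow IG$ of Theorem~\ref{th:liens_notions_id}.

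\textbf{Step 2: ICD fails.} This is exactly the content of Proposition~\ref{prop:ICA_not_ICD} for the same class and query, which we assume. Combining Steps 1 and 2 gives a class in which the query is identifiable through graphs but not by common derivation, proving the corollary.

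The only delicate point is inherited from Proposition~\ref{prop:ICA_not_ICD}: the inclusion $\mathcal{D}[\G_1]\cap\mathcal{D}[\G_2]\subseteq\mathcal{D}[\G_3]$ with $\G_3 \coloneqq X\rightarrow Z\rightarrow Y$, and the construction of a distribution $P^\star$ that is Markovian for both $\G_3$ and $\G_2$ yet violates $Z$-adjustment in $\G_3$. If that inclusion needed re-verification, one would enumerate the d-separation conditions behind each rule application on the three variables. This is finite bookkeeping, and it reduces to the fact that every relevant d-separation holding in both $\G_1$ and $\G_2$ also holds in $\G_3$. For the corollary itself we take this as established.
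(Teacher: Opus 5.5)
Your proposal is correct and follows the paper's proof: you reuse the class $\{\G_1,\G_2\}$ from Proposition~\ref{prop:ICA_not_ICD}, obtain IG from the common adjustment set $\{Z\}$ via $ICA \Rightarrow IG$, and take the failure of ICD from that proposition. One harmless slip: in $\G_2$ the vertex $X$ has no children, so there is no proper causal path from $X$ to $Y$ and $\Forb(X,Y,\G_2)=\emptyset$ rather than $\{Y\}$; this still excludes $Z$, so your conclusion is unaffected.
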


\begin{proof}
Let $\mathcal{C}$ be the class constructed in Proposition~\ref{prop:ICA_not_ICD}. Since ICA holds in $\mathcal{C}$, there exists a single identification functional (e.g., the adjustment formula) valid for all SCMs with graphs in $\mathcal{C}$. Thus, IG holds by Definition~\ref{def:IG}. However, as shown in Proposition~\ref{prop:ICA_not_ICD}, ICD fails in $\mathcal{C}$. This establishes the corollary.
\end{proof}

The results above have implications for the use of atomically complete calculi (e.g., \citet{yvernes2025relaxing}) as tools for identifying causal queries in causal abstractions. While an atomically complete calculus guarantees ICD (and thus IG, since ICD implies IG), our findings show that atomic completeness is stronger than necessary for IG. Indeed, IG may hold even when no atomically complete calculus exists, because atomic completeness requires derivations to be uniform across all graphs represented by a causal abstraction, whereas IG allows graph-specific derivations. This suggests that characterizing IG may require identification tools that are more flexible than atomically complete calculi.

\section{Discussion}
\label{sec:discussions}

\subsection{Interpretation of the Hierarchy}

Beyond its formal implications, the hierarchy provides a useful lens for interpreting existing results on causal effect identifiability in abstractions. It makes it possible to distinguish precisely which notion is characterized by a given approach and to identify potential gaps between existing guarantees and stronger notions of identifiability. In particular, a claim of completeness should always be interpreted with respect to a specific level of the hierarchy: completeness for ICD, for instance, does not imply completeness for IG. The atomically complete calculus established in \citet{yvernes2025relaxing} solves ICD for the causal abstractions and causal queries considered in this work, while IG remains open. Thus, the hierarchy provides a framework for locating existing results and assessing which stronger notions remain uncharacterized.

From this perspective, IG can be viewed as a natural target when causal effects are to be identified from a purely graphical abstraction. Among the notions considered here, IG is the least restrictive, as it requires only graphical identifiability and does not impose a specific form of identification strategy. When the available information is purely graphical, establishing IG guarantees that no effect identifiable from this information is missed. Thus, IG provides a natural completeness target for purely graphical causal abstractions. This makes IG particularly useful for characterizing the limits of what can be inferred from the graphical abstraction alone.

Conversely, if a causal effect is not IG under the available graphical abstraction, then identifying it requires information beyond that encoded by the abstraction. Such additional information may take the form of a refinement of the graphical abstraction or of non-graphical assumptions or information about the underlying causal model. In this sense, failure of IG does not merely indicate that a particular identification method is insufficient; it indicates that the available graphical information alone is insufficient to identify the effect.

\subsection{Maximal Elements Is All You Need}

Identifiability notions defined over a collection of causal diagrams can become difficult to apply when the collection \(\mathcal{C}\) is large. A useful reduction is that, for the notions considered in this paper, it is sufficient to analyze only the maximal elements of \(\mathcal{C}\) under graph inclusion. The key reason for this reduction is monotonicity under edge deletion: if an independence property holds in a graph, then it also holds in any graph obtained from it by deleting edges.

We make the following assumption throughout this section.

\begin{assumption}[Existence of Maximal Elements]
\label{ass:maximal_elements}
    For every graph \(\G\in\mathcal{C}\), there exists a maximal element \(\G^{\max}\in\mathcal{C}\) such that \(\G\) is contained in \(\G^{\max}\).
\end{assumption}

This assumption is automatic when \(\mathcal{C}\) is finite: starting from any \(\G\in\mathcal{C}\), repeatedly moving to a strict supergraph in \(\mathcal{C}\) must eventually terminate at a maximal element containing \(\G\). In particular, when \(\mathcal{V}\) is finite, there are only finitely many causal diagrams over \(\mathcal{V}\), and hence \(\mathcal{C}\) is finite. Finiteness of \(\mathcal{C}\), however, is not necessary. The assumption also holds in some infinite settings, such as causal abstractions for time series, including Extended Summary Causal Graphs (ESCGs) \citet{assaad2022} and Summary Causal Graphs (SCGs) \citet{multivariateICA}.

Lemma~\ref{lemma:Identifiabilite_et_inclusion} formalizes this monotonicity.

\begin{restatable}[Monotonicity under Graph Inclusion]{lemma}{lemmamonotonicity}
\label{lemma:Identifiabilite_et_inclusion}
Let \(\mathcal{V}\) be the set of observable variables, and let \(\G_1\) and \(\G_2\) be two causal diagrams on \(\mathcal{V}\) such that \(\G_2\) is a subgraph of \(\G_1\). Let \(\mathbf{X}, \mathbf{Y}, \mathbf{Z} \subseteq \mathcal{V}\) be pairwise disjoint with \(\mathbf{Y} \neq \emptyset\). Then:
\begin{enumerate}
    \item Any identification functional valid in \(\G_1\) for
    \(P\bigl(\mathbf{y}\mid \Do(\mathbf{x}), \mathbf{z}\bigr)\)
    is also valid in \(\G_2\).
    
    \item Let \(P^\star(\mathcal{V})\) be an observational distribution.
    Any identification functional valid for
    \(P\bigl(\mathbf{y}\mid \Do(\mathbf{x}), \mathbf{z}\bigr)\)
    in all SCMs $\mathcal M_1$ inducing \(\G_1\) and satisfying
    \(P_{\mathcal M_1}(\mathcal{V})=P^\star(\mathcal{V})\) is also valid in all SCMs $\mathcal M_2$
    inducing \(\G_2\) and satisfying
    \(P_{\mathcal M_2}(\mathcal{V})=P^\star(\mathcal{V})\).
    
    \item Any causal derivation that holds in \(\G_1\) also holds in \(\G_2\).
    \item Any graphical criterion satisfied in \(\G_1\) is also satisfied in \(\G_2\).
\end{enumerate}
\end{restatable}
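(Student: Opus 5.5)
The plan is to reduce all four items to two elementary facts. The first is a \emph{model-embedding} fact: every SCM inducing the smaller graph $\G_2$ can be turned into an SCM inducing $\G_1$ with exactly the same observational and interventional distributions. The second is a \emph{separation-monotonicity} fact: deleting edges can only destroy active paths. Items 1 and 2 will follow from the first fact, and items 3 and 4 from the second.

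For the embedding, let $\mathcal M_2=\langle \U,\V,\F_{\mathcal M_2},P(\U)\rangle$ with $\G_{\mathcal M_2}=\G_2$, and build $\mathcal M_1$ as follows.
\begin{itemize}
\item For each directed edge $A\rightarrow V_i$ in $\G_1$ but not in $\G_2$, add $A$ to the argument list of $f_i$, letting the new function ignore this argument.
\item For each bidirected edge $V_i \dashleftrightarrow V_j$ in $\G_1$ but not in $\G_2$, add a fresh exogenous variable $U_{ij}$, independent of everything else, to both $\U_i$ and $\U_j$, again not used by the functions.
\end{itemize}
Since the causal diagram is read off syntactically from the argument lists $\Pa(V_i)$ and $\U_i$, we get $\G_{\mathcal M_1}=\G_1$. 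Because the functions are unchanged as maps, $\mathcal M_1$ and $\mathcal M_2$ have the same observational distribution. For the same reason, every intervention $\Do(\mathbf x)$ yields the same interventional distribution in both models, so $P_{\mathcal M_1}(\mathbf y\mid\Do(\mathbf x),\mathbf z)=P_{\mathcal M_2}(\mathbf y\mid\Do(\mathbf x),\mathbf z)$.

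\textbf{Items 1 and 2.} Let $\phi$ be valid in $\G_1$. Then $\phi(P_{\mathcal M_2})=\phi(P_{\mathcal M_1})=P_{\mathcal M_1}(\mathbf y\mid\Do(\mathbf x),\mathbf z)=P_{\mathcal M_2}(\mathbf y\mid\Do(\mathbf x),\mathbf z)$, which gives item 1. Item 2 is identical, using in addition that $P_{\mathcal M_1}=P_{\mathcal M_2}=P^\star$, so $\mathcal M_1$ lies in the restricted class.

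\textbf{Item 3.} I would show that every d-separation statement used in $\mathcal R1$--$\mathcal R3$ transfers from $\G_1$ to $\G_2$. Then, by Theorem~\ref{thm:derivation_validity}, $\mathcal S[D]\subseteq\mathcal D[\G_1]\subseteq\mathcal D[\G_2]$, and $D$ holds in $\G_2$.
\begin{itemize}
\item Mutilation commutes with taking subgraphs, so $(\G_2)_{\overline{\X}\underline{\Z}}\subseteq(\G_1)_{\overline{\X}\underline{\Z}}$.
\item Every path in the smaller graph is a path in the larger one, with the same collider and non-collider status at each vertex.
\item Descendant sets only shrink in the smaller graph. Hence a non-collider that is blocked stays blocked, and a collider that is inactive (no descendant in the conditioning set) stays inactive.
\item Together, these give that d-separation in a graph implies d-separation in any of its subgraphs.
\end{itemize}
The one delicate point is $\mathcal R3$, whose mutilation depends on $\Z(\W)$, the $\Z$-nodes that are non-ancestors of $\W$ in $\G_{\overline{\X}}$. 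Ancestor sets shrink in $\G_2$, so $\Z(\W)$ computed in $\G_2$ contains the one computed in $\G_1$. More incoming edges are therefore removed, and the mutilated $\G_2$ is still a subgraph of the mutilated $\G_1$, so the same argument applies.

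\textbf{Item 4.} For the criteria of the paper (Definitions~\ref{def:backdoor}, \ref{def:frontdoor} and~\ref{def:critere_ajustement}), I would check each clause separately. Each clause is either a statement that some vertex set avoids a set of descendants (or a forbidden set), or a statement that all paths of a certain type are blocked.
\begin{itemize}
\item Descendant sets, causal paths, and hence $\Forb(\X,\Y,\G)$ can only shrink in $\G_2$, so disjointness conditions are preserved.
\item Any path of a given type in $\G_2$ (backdoor, proper non-causal, into $\mathbf Z$, directed) is a path of the same type in $\G_1$, where it is blocked. By the collider and non-collider argument above, it remains blocked in $\G_2$.
\item For the frontdoor condition that $\mathbf Z$ intercepts all directed paths, every directed path of $\G_2$ is a directed path of $\G_1$, so it is intercepted.
\end{itemize}

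I expect the main obstacle to be conceptual rather than technical. In item 1 one must notice that validity in $\G_1$ only quantifies over SCMs whose graph is \emph{exactly} $\G_1$, which is why the dummy-argument embedding is needed. In item 4, ``any graphical criterion'' must be read as criteria built from path-blocking and ancestry conditions of this monotone form, and I would state that scope explicitly.
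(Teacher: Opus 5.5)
Your proposal is correct and follows the paper's proof. Items 1 and 2 use the same embedding: extra parent arguments that the functions ignore, plus a fresh independent exogenous $U_{ij}$ for each new bidirected edge. Items 3 and 4 use the monotonicity of d-separation under edge deletion. Your version is more careful than the paper's one-line treatment of items 3 and 4 in two respects. You check that $\Z(\W)$ in $\mathcal R3$ can only grow in $\G_2$, so the mutilated $\G_2$ stays a subgraph of the mutilated $\G_1$. You also restrict item 4 to monotone path-blocking and ancestry criteria, which is the only reading under which that item holds.
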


\begin{proof}[Proof sketch]
    All four statements follow from the fact that \(\G_2\) is obtained from \(\G_1\) by removing edges. 
    
    For items~(1) and~(2), any SCM inducing \(\G_2\) can be extended to an SCM inducing \(\G_1\) by adding the missing parent arguments to the structural equations while keeping the functional dependence unchanged on the original parents. This extension preserves both the observational and interventional distributions. In particular, it preserves any fixed observational distribution \(P^\star(\mathcal{V})\). Items~(3) and~(4) follow from the fact that every graphical independence that holds in \(\G_1\) also holds in \(\G_2\). 
    
    The full proof is given in Appendix~\ref{proof:further}.
\end{proof}

Lemma~\ref{lemma:Identifiabilite_et_inclusion} implies that the identifiability notions considered in this paper are determined by the maximal graphs in \(\mathcal{C}\). This is formalized in the following theorem.

\begin{theorem}[Reduction to Maximal Elements]
\label{th:identifiabilite_et_elements_maximaux}
    Let \(\mathcal{C}\) be a class of causal diagrams over a set of variables \(\mathcal{V}\) satisfying Assumption~\ref{ass:maximal_elements}, and let \(\mathbf{X}, \mathbf{Y}, \mathbf{Z} \subseteq \mathcal{V}\) be pairwise disjoint with \(\mathbf{Y} \neq \emptyset\). Let \(\mathcal{C}^{\max}\) denote the subcollection of maximal elements of \(\mathcal{C}\) under graph inclusion. Consider any notion of identifiability discussed above (namely, IG, IGP, ICD, ICB, ICF, or ICA). Then the following statements are equivalent:
    \begin{enumerate}
        \item \(P\bigl(\mathbf{y}\mid \Do(\mathbf{x}), \mathbf{z}\bigr)\) is identifiable in \(\mathcal{C}\).
        \item \(P\bigl(\mathbf{y}\mid \Do(\mathbf{x}), \mathbf{z}\bigr)\) is identifiable in \(\mathcal{C}^{\max}\).
    \end{enumerate}
\end{theorem}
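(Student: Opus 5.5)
The proof splits along the two directions of the equivalence, handling all six notions in parallel. The direction (1)$\Rightarrow$(2) is immediate from the fact that $\mathcal{C}^{\max}\subseteq\mathcal{C}$. Each notion is a \emph{universally quantified} requirement over the class. For IG and IGP, the requirement is that one functional is valid for every SCM whose graph lies in the class (and, for IGP, with $P_{\M}=P^\star$). For ICD, it is that one derivation holds in every graph of the class. For ICB, ICF and ICA, it is that one set satisfies the corresponding criterion in every graph. Restricting to a subcollection only removes constraints, so the same witness (functional, derivation, or adjustment set) works for $\mathcal{C}^{\max}$. I will state this explicitly once for all notions, fixing for each one its \emph{witness object} $w$ and its \emph{per-graph validity predicate} $\mathrm{Val}(w,\G)$.

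For (2)$\Rightarrow$(1), take the witness $w$ that establishes identifiability in $\mathcal{C}^{\max}$, and let $\G\in\mathcal{C}$ be arbitrary. By Assumption~\ref{ass:maximal_elements} there is $\G^{\max}\in\mathcal{C}^{\max}$ with $\G\subseteq\G^{\max}$, and $\mathrm{Val}(w,\G^{\max})$ holds. It then suffices to show that $\mathrm{Val}(w,\cdot)$ is inherited by subgraphs, and this is exactly Lemma~\ref{lemma:Identifiabilite_et_inclusion} applied with $\G_1=\G^{\max}$ and $\G_2=\G$:
\begin{itemize}
\item for IG, item~1 gives that the functional valid in $\G^{\max}$ is valid in every SCM inducing $\G$;
\item for IGP, item~2 gives the same for SCMs with observational distribution $P^\star$;
\item for ICD, item~3 gives that the common derivation still holds in $\G$;
\item for ICB, ICF and ICA, item~4 gives that the common backdoor, frontdoor, or adjustment set still satisfies its criterion in $\G$.
\end{itemize}
Because the witness does not depend on $\G$, validity over all of $\mathcal{C}$ follows. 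For the criterion-based notions this also preserves the associated formula, so the functional is uniform across the class, as the definitions require.

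There are two points to handle carefully. First, for IG and IGP the definition quantifies over SCMs rather than graphs. I will note that an SCM $\M$ with $\G_{\M}\in\mathcal{C}$ has $\G_{\M}=\G\subseteq\G^{\max}$, so item~1 (respectively item~2) applies directly to $\M$. Second, item~4 is stated loosely ("any graphical criterion"), and I expect it to be the main obstacle. The frontdoor and adjustment criteria are not obviously monotone clause by clause. For example, the forbidden set $\Forb(\X,\Y,\G)$ can only shrink when edges are removed, which is the favorable direction. The blocking conditions concern paths, and deleting edges deletes paths, also the favorable direction. The frontdoor condition that $\mathbf Z$ intercept all directed paths is likewise preserved under deletion. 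If any clause turned out not to be monotone, I would fall back on the route that bypasses item~4: ICB and ICF imply ICD by Theorem~\ref{th:liens_notions_id}, and ICD is monotone by item~3. Even then, the statement requires the \emph{criterion} itself to hold in each $\G$, so I would verify each clause of Definitions~\ref{def:backdoor}, \ref{def:frontdoor} and~\ref{def:critere_ajustement} against edge deletion rather than cite item~4 blindly.
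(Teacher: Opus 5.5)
Your proposal is correct and follows the paper's proof essentially step for step. The direction (1)$\Rightarrow$(2) follows from $\mathcal{C}^{\max}\subseteq\mathcal{C}$, and (2)$\Rightarrow$(1) lifts each $\G\in\mathcal{C}$ to a maximal supergraph $\G^{\max}$ and applies items~1--4 of Lemma~\ref{lemma:Identifiabilite_et_inclusion} notion by notion. Your clause-by-clause check of item~4 is sound and more careful than the paper, which simply cites the lemma. For the blocking clauses, add one detail: a path that survives edge deletion keeps its colliders and non-colliders, while collider descendant sets can only shrink, so a path blocked in $\G^{\max}$ stays blocked in $\G$.
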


\begin{proof}
The implication \((1)\Rightarrow(2)\) is immediate since \(\mathcal{C}^{\max}\subseteq \mathcal{C}\) by Assumption~\ref{ass:maximal_elements}.

For \((2)\Rightarrow(1)\), let \(\G\in\mathcal{C}\). By Assumption~\ref{ass:maximal_elements}, there exists \(\G^{\max}\in\mathcal{C}^{\max}\) such that \(\G\) is a subgraph of \(\G^{\max}\).

We distinguish the different notions of identifiability considered above:
\begin{itemize}
    \item If identifiability in \(\mathcal{C}^{\max}\) is established through a common identification functional, then this functional is valid in \(\G^{\max}\). By Lemma~\ref{lemma:Identifiabilite_et_inclusion}, it is also valid in \(\G\).
    
    \item If identifiability in \(\mathcal{C}^{\max}\) is established through a common identification functional knowing \(P^\star\), then this functional is valid for all SCMs inducing \(\G^{\max}\) and satisfying \(P_M(\V)=P^\star(\V)\). By Lemma~\ref{lemma:Identifiabilite_et_inclusion}, it is also valid for all SCMs inducing \(\G\) and satisfying \(P_M(\V)=P^\star(\V)\).

    \item If identifiability in \(\mathcal{C}^{\max}\) is established through a common causal derivation, then this derivation holds in \(\G^{\max}\). Again by Lemma~\ref{lemma:Identifiabilite_et_inclusion}, it also holds in \(\G\).

    \item If identifiability in \(\mathcal{C}^{\max}\) is established through a common graphical criterion, then this criterion is satisfied in \(\G^{\max}\). By Lemma~\ref{lemma:Identifiabilite_et_inclusion}, it is therefore also satisfied in \(\G\).
\end{itemize}

Hence, the query is identifiable in \(\mathcal{C}\).
\end{proof}

Theorem~\ref{th:identifiabilite_et_elements_maximaux} shows that every notion of identifiability becomes easier to assess when \(\mathcal{C}\) has few maximal elements. In particular, if \(\mathcal{C}\) admits a greatest element under graph inclusion, then identifiability in \(\mathcal{C}\) reduces to identifiability in a single graph.

\begin{corollary}
\label{cor:single_max}
If \(\mathcal{C}\) contains a greatest element under graph inclusion, then IG is equivalent to ICD.
\end{corollary}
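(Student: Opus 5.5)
The plan is to reduce both notions to the single greatest element and then invoke completeness of do-calculus in a single graph. Let \(\G^{\max}\) denote the greatest element of \(\mathcal{C}\). Every \(\G \in \mathcal{C}\) is then a subgraph of \(\G^{\max}\), so Assumption~\ref{ass:maximal_elements} holds automatically and \(\mathcal{C}^{\max} = \{\G^{\max}\}\). By Theorem~\ref{th:identifiabilite_et_elements_maximaux}, IG in \(\mathcal{C}\) is therefore equivalent to IG in \(\{\G^{\max}\}\), and ICD in \(\mathcal{C}\) is equivalent to ICD in \(\{\G^{\max}\}\). It thus suffices to show that IG and ICD coincide on a singleton collection.

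The direction ICD \(\Rightarrow\) IG is Proposition~\ref{prop:ICD_implique_IG}, so only the converse needs an argument. Assume IG holds in \(\{\G^{\max}\}\). Unfolding Definition~\ref{def:IG}, there is an identification functional valid for \(P\bigl(\mathbf{y}\mid\Do(\mathbf{x}),\mathbf{z}\bigr)\) in every SCM \(\M\) with \(\G_\M = \G^{\max}\). This is exactly identifiability in the single graph \(\G^{\max}\) in the sense of Definition~\ref{def:single}.

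I would then invoke soundness and completeness of do-calculus in a single causal diagram \citep{shpitser_identification_2006}, as recalled in Section~\ref{sec:singleGraph}. This yields a causal derivation that holds in \(\G^{\max}\) and rewrites the query into a do-free expression. That is precisely ICD in \(\{\G^{\max}\}\). The reduction of the previous paragraph, or directly item~3 of Lemma~\ref{lemma:Identifiabilite_et_inclusion}, then shows that this derivation holds in every \(\G \in \mathcal{C}\), so ICD holds in \(\mathcal{C}\).

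The only delicate point is checking that the definitions line up. IG on a singleton must be literally Pearl-style identifiability in that graph, which holds as stated. The completeness result must also produce a derivation in the paper's sense, that is, a finite sequence of do-calculus rule applications together with standard probability manipulations. The ID algorithm's output is of this form, so I expect no real obstacle beyond citing it carefully.
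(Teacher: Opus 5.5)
Your proposal is correct and follows the paper's proof: both identify $\mathcal{C}^{\max}$ with the singleton containing the greatest element, use Theorem~\ref{th:identifiabilite_et_elements_maximaux} to reduce IG and ICD in $\mathcal{C}$ to that single graph, and conclude there by soundness and completeness of do-calculus. Citing Proposition~\ref{prop:ICD_implique_IG} for the easy direction and item~3 of Lemma~\ref{lemma:Identifiabilite_et_inclusion} to lift the derivation back to $\mathcal{C}$ are only cosmetic shortcuts within the same argument.
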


\begin{proof}
    Let \(\G^{\star}\) denote the greatest element of \(\mathcal{C}\). Then
    \(
    \mathcal{C}^{\max} = \{\G^{\star}\}.
    \)
    By Theorem~\ref{th:identifiabilite_et_elements_maximaux}, IG in \(\mathcal{C}\) is equivalent to IG in \(\mathcal{C}^{\max}\), and thus to IG in the single graph \(\G^{\star}\).
    On a single causal diagram, soundness and completeness of do-calculus imply that IG in \(\G^{\star}\) is equivalent to ICD in \(\{\G^{\star}\}\).
    Finally, applying again Theorem~\ref{th:identifiabilite_et_elements_maximaux} yields that ICD in \(\G^{\star}\) is equivalent to ICD in \(\mathcal{C}\). Hence IG and ICD are equivalent in \(\mathcal{C}\).
\end{proof}

This situation arises, for example, in cluster DAGs \citep{anand_causal_2023}, where the partition admissibility assumption ensures the existence of a unique maximal element under inclusion. When this assumption is relaxed, as in \citep{yvernes2025relaxing}, Corollary~\ref{cor:single_max} no longer applies. In that setting, ICD has been established, whereas IG remains open.

More generally, causal abstractions that induce only a small number of maximal graphs appear favorable from the standpoint of tractable identifiability analysis.

\section{Conclusion}
\label{sec:conclusion}

In this work, we proposed a unified framework for studying causal identifiability in causal abstractions represented as collections of causal graphs. Within this setting, we formalized several notions of identifiability, namely IG, ICD, IGP, ICA, ICB, and ICF, providing a common language for comparing previously disparate definitions.

We established the hierarchy relating these notions (Figure~\ref{fig:liens_notions_id}) and characterized its strictness through explicit counterexamples. These results delineate the relative strength of the different notions and show that several natural implications fail in general.

To analyze these relationships, we used derivation graphs and step graphs as methodological tools for reasoning about causal identifiability across collections of graphs. These constructions provide a fine‑grained representation of causal derivations that hold and help explain why existing approaches, including atomically complete calculi, do not necessarily capture all identifiable queries in causal abstractions.

From a broader perspective, our results reveal a fundamental separation between graph-based and derivation-based notions of identifiability. This separation raises both methodological and computational questions. In particular, developing efficient procedures for deciding IG and ICD, and characterizing their computational complexity, remain open directions. More generally, the hierarchy provides a basis for understanding and designing causal abstractions with controlled identifiability properties.

\newpage
\bibliography{main}
\bibliographystyle{tmlr}

\appendix
\section{Proofs of Section~\ref{sec:strict_hierarchy}}
\subsection{Proofs of Section \ref{subsec:implications}}
\label{sec:proof:prop}

\theoremhierarchy*
\begin{proof}
We prove each implication separately.

\begin{enumerate}

    \item \textbf{ICB $\Rightarrow$ ICA.}
    Assume that ICB holds for $P(\mathbf{y}\mid \Do(\mathbf{x}))$ over $\mathcal{C}$. Then, there exists a set $\mathbf{Z}$ that satisfies the backdoor criterion relative to $\mathbf X$ and $\mathbf Y$ in all $\G \in \mathcal C$. Since the backdoor criterion is a special case of the adjustment criterion, $\mathbf{Z}$ also satisfies the adjustment criterion relative to $\mathbf{X}$ and $\mathbf{Y}$, in all  $\G \in \mathcal C$.
    Therefore, ICA holds.

    \item \textbf{ICB $\Rightarrow$ ICD.}
    Assume that ICB holds for
    $
    P(\mathbf{y}\mid \Do(\mathbf{x}))
    $
    over $\mathcal{C}$, and let $\mathbf{Z}$ be the corresponding common backdoor adjustment set.  For every $\mathcal{G}\in\mathcal{C}$, the backdoor criterion yields:

    \begin{align*}
        P(\mathbf y \mid \Do(\mathbf x)) &= \sum_{\mathbf z}P(\mathbf y,\mathbf z \mid \Do(\mathbf x))  &(\text{by marginalisation})\\
            &= \sum_{\mathbf z}P(\mathbf y \mid \Do(\mathbf x), \mathbf z) P(\mathbf z \mid \Do(\mathbf x)) &(\text{by chain rule})\\
            &= \sum_{\mathbf z}P(\mathbf y \mid \Do(\mathbf x), \mathbf z) P(\mathbf z) &(\text{by } \mathcal R3)\\
            &= \sum_{\mathbf z}P(\mathbf y \mid \mathbf x, \mathbf z) P(\mathbf z) &(\text{by } \mathcal R2)\\
    \end{align*}
    where:
    \begin{itemize}
        \item the equality
        $
        P(\mathbf{z}\mid \Do(\mathbf{x})) = P(\mathbf{z})
        $
        follows from Rule~3 of do-calculus since
        $
        \mathbf{Z}
        $
        contains no descendant of $\mathbf{X}$;
        \item the equality
        $
        P(\mathbf{y}\mid \Do(\mathbf{x}),\mathbf{z})
        =
        P(\mathbf{y}\mid \mathbf{x},\mathbf{z})
        $
        follows from Rule~2 of do-calculus since
        $
        \mathbf{Z}
        $
        blocks all backdoor paths from $\mathbf{X}$ to $\mathbf{Y}$.
    \end{itemize}

    Since this derivation holds for every graph in $\mathcal{C}$, it is a common derivation. Hence ICD holds.

    \item \textbf{ICF $\Rightarrow$ ICD.}
    Assume that ICF holds for
    $
    P(\mathbf{Y}\mid \Do(\mathbf{X}))
    $
    over $\mathcal{C}$, and let $\mathbf{Z}$ be the corresponding common frontdoor set. For every $\mathcal{G}\in\mathcal{C}$, the frontdoor criterion yields:
    \begin{align*}
        P(\mathbf{y}\mid \Do(x))
        &= \sum_{\mathbf z}P(\mathbf y,\mathbf z \mid \Do(x))  &(\text{by marginalisation})\\
        &= \sum_{\mathbf{z}}
        P(\mathbf{z}\mid \Do(x))
        P(\mathbf{y}\mid \mathbf{z},\Do(x)) &(\text{by chain rule})\\
        &=
        \sum_{\mathbf{z}}
        P(\mathbf{z}\mid x)
        P(\mathbf{y}\mid \mathbf{z},\Do(x)) &(\text{by }\mathcal R2)\\
        &=
        \sum_{\mathbf{z}}
        P(\mathbf{z}\mid x)
        P(\mathbf{y}\mid \Do(\mathbf{z}),\Do(x)) &(\text{by }\mathcal R2)\\
        &=
        \sum_{\mathbf{z}}
        P(\mathbf{z}\mid x)
        P(\mathbf{y}\mid \Do(\mathbf{z})) &(\text{by }\mathcal R3)\\
        &=
        \sum_{\mathbf{z}}
        P(\mathbf{z}\mid x)
        \sum_{x'}
        P(\mathbf{y}\mid x',\mathbf{z})
        P(x'), &(\text{by backdoor derivation})
    \end{align*}
    where:
    \begin{itemize}
        \item the equality $P(\mathbf{z}\mid \Do(x)) = P(\mathbf{z}\mid x)$
        follows from Rule~2 of do-calculus since there is no unblocked path from X to $\mathcal Z$, that contains an arrow into X;
        
        \item the equality $P(\mathbf{y}\mid \mathbf{z},\Do(x)) =  P(\mathbf{y}\mid \Do(\mathbf{z}),\Do(x))$ follows from $\mathcal R2$ since all backdoor paths from $\mathbf Z$ to $Y$ are blocked by $X$;
        
        \item  the equality $P(\mathbf{y}\mid \Do(\mathbf{z}),\Do(x)) =  P(\mathbf{y}\mid \Do(\mathbf{z}))$ follows from $\mathcal R3$ since $\mathbf Z$ intercepts all directed paths from $X$ to $Y$;
        
        \item the equality
        $ P(\mathbf{y}\mid \Do(\mathbf{z})) = \sum_{x'} P(\mathbf{y}\mid x',\mathbf{z}) P(x')$
        follows from the backdoor criterion since all backdoor paths from $\mathbf Z$ to $Y$ are blocked by $X$ and since there is no collider-free backdoor path from X to $\mathcal Z$, we know that X cannot be a descendant of $\mathbf Z$. The case (ICB $\Rightarrow$ ICD) gives us the corresponding derivation.
    \end{itemize}

    Since this derivation holds for every graph in $\mathcal{C}$, it is a common derivation. Hence ICD holds.

    \item \textbf{ICA $\Rightarrow$ IG.}
    Assume that ICA holds for $P(\mathbf{y}\mid \Do(\mathbf{x}))$ over $\mathcal{C}$, and let $\mathbf{Z}$ be the corresponding common adjustment set.

    By Theorem~\ref{th:adjustment_perkovic}, for every graph $ \mathcal{G}\in\mathcal{C},$ and for all probability $P$ compatible with $G$, we have:
    \[
    P(\mathbf{y} \mid \Do(\mathbf{x})) =
    \begin{cases}
        P(\mathbf{y} \mid \mathbf{x}) & \text{if } \mathbf{Z} = \emptyset, \\
        \sum_{\mathbf{z}} P(\mathbf{y} \mid \mathbf{x}, \mathbf{z}) P(\mathbf{z}) & \text{otherwise.}
    \end{cases}
    \]

    Therefore, the identification functional $\displaystyle     P_{\mathcal M} \mapsto
    \begin{cases}
    P_{\mathcal M}(\mathbf{y} \mid \mathbf{x}) & \text{if } \mathbf{Z} = \emptyset, \\
    \sum_{\mathbf{z}} P_{\mathcal M}(\mathbf{y} \mid \mathbf{x}, \mathbf{z}) P_{\mathcal M}(\mathbf{z}) & \text{otherwise.}
    \end{cases}$ is valid for all $\mathcal M$ such that $\G_{\mathcal M} \in \mathcal C$. Hence IG holds.

    \item \textbf{ICD $\Rightarrow$ IG.} This implication is shown by Proposition~\ref{prop:ICD_implique_IG}.

    \item \textbf{IG $\Rightarrow$ IGP.} This implication is shown by Proposition~\ref{prop:IG_implique_IGP}.

\end{enumerate}
\end{proof}

\subsection{Proofs of Section~\ref{subsec:strictness}}
\label{proof:strictness}

While Theorem~\ref{th:liens_notions_id} establishes a hierarchy of implications between these identifiability notions, Proposition~\ref{prop:strictness} demonstrates that these implications are strict and that some notions are incomparable:

\propstrictness*
\begin{proof}
We consider the converses of the implications presented in Theorem~\ref{th:liens_notions_id}:
\begin{itemize}
    \item \textbf{The failure of $ICA \Rightarrow ICB$} is given by the following counter example: Consider the graph $\mathcal G \coloneqq Z \leftarrow X \rightarrow Y $. $\{Z\}$ satisfies the adjustment criterion relative to $X$ and $Y$ in $G$ but does not satisfy the backdoor criterion since $Z$ is a descendant of $X$. Hence, $ICA$ holds for $P(y \mid \Do(x))$ in $\{\G\}$ but $ICB$ does not  hold for $P(y \mid \Do(x))$ in $\{\G\}$. 
    
    \item \textbf{Failures of $ICD \Rightarrow ICA$, $IG \Rightarrow ICA$, $ICD \Rightarrow ICB$ and $ICD \Rightarrow ICF$ from the single graph setting.} We consider the collection $\mathcal C$ made of a single graph, the Napkin Graph presented in Figure~\ref{fig:napkin_graph} and we consider the query $P(y\mid \Do(x))$. We have the following properties:
        \begin{itemize}
            \item As shown in \citep{pearl2018bookofwhy, guo2025causalinferencenapkingraph}, $P(y\mid \Do(x))$ is identifiable in the Napkin Graph. For completeness, identifiability is given by the following derivation:
            \begin{align*}
                P(y\mid \Do(x)) &= P(y\mid \Do(x),\Do(z))
                                &(\text{by }\mathcal R3)\\
                                &= P(y\mid x,\Do(z))
                                &(\text{by }\mathcal R2)\\
                                &= \frac{P(y,x \mid \Do(z))}{P(x \mid \Do(z))}
                                &(\text{conditional probability})\\
                                &= \frac{\sum_w P(y,x \mid z,w)P(w)}{\sum_w P(x \mid z,w)P(w)}.
                                &(\text{by backdoor derivation})
            \end{align*}
            Hence, ICD and IG hold in $\C$ for $P(y\mid \Do(x))$.
            
            \item We will show that ICB, ICA and ICF do not hold in $\C$. We consider the cases separately:
            \begin{itemize}
                \item \textbf{ICB and ICA}. All candidate adjustment sets are $\left\{ \emptyset, \{W\}, \{Z\}, \{W,Z\} \right\}$. However, none of them is an adjustment set. Indeed, conditioning on $W$ or $Z$ opens the backdoor path $X \dashleftrightarrow W \dashleftrightarrow Y$ and not conditioning on them lets the path $X \leftarrow Z \leftarrow W \dashleftrightarrow Y$ open. Hence, no common adjustment set exists and ICA does not hold. Since ICB is a special case of ICA, ICB does not hold either
                
                \item \textbf{ICF.} All candidate frontdoor sets are $\left\{ \emptyset, \{W\}, \{Z\}, \{W,Z\} \right\}$. However, none of them is an adjustment set since it is impossible to intercept $X \rightarrow Y$.
            \end{itemize}
            Hence, in this case, ICB, ICA, and ICF all fail to hold.
        \end{itemize}
    
    \item \textbf{Incomparability of ICA and ICD:} The failure of $ICD \Rightarrow ICA$ has been established in the previous item and Proposition~\ref{prop:ICA_not_ICD} establishes the failure of $ICA \Rightarrow ICD$.
    
    \item \textbf{The failure of $ICA \Rightarrow ICD$} is established by Proposition~\ref{prop:ICA_not_ICD}
    
    \item \textbf{The failure of $IG \Rightarrow ICD$} is established by Corollary~\ref{cor:IG_not_ICD}
    
    \item  \textbf{The failure of $IGP \Rightarrow IG$} is established by Proposition~\ref{ex:failureIGPfIG}.
\end{itemize}
\end{proof}

\begin{figure}[t]
    \centering
\begin{tikzpicture}[baseline=-0.5ex, node distance=10mm,
      node/.style = {inner sep=.4pt, minimum size=9mm,},
      >={Stealth[length=6pt]}, every edge/.style={->, semithick}, scale=2]
      
      \node[node] (W) {$W$};
      \node[node] (Z) [right=of W] {$Z$};
      \node[node] (X) [right=of Z] {$X$};
      \node[node] (Y) [right=of X] {$Y$};
      
      \draw[->] (W) -- (Z);
      \draw[->] (Z) -- (X);
      \draw[->] (X) -- (Y);
      
      \draw[dashed,<->,bend left=17] (W) to (X);
      \draw[dashed,<->,bend left=25] (W) to (Y);

    \end{tikzpicture}
\caption{The Napkin Graph \citep{pearl2018bookofwhy, guo2025causalinferencenapkingraph}}
\label{fig:napkin_graph}
\end{figure}

\subsubsection{Proofs of Section~\ref{subsubsec:simple_sep}}
\label{sec:proof:simple_sep}

\failureIGPfIG*

\begin{proof}[Detailed Proof]
    We prove the failure of $IGP \Rightarrow IG$, by considering the collection of causal diagrams {$\C \coloneqq \{ X \rightarrow Y, X \leftarrow Y \}$} and the causal query $P(Y = 1 \mid \Do(X = 1))$. We have the following properties:
    \begin{itemize}
        \item $P(Y = 1 \mid \Do(X = 1))$ \textbf{is not identifiable through graphs in $\C$.} Indeed, let us consider the SCM $\M_1$ defined by $X \sim \mathcal{B}\left(\frac{1}{2}\right)$ and $Y = X \oplus U$ where $U \sim \mathcal{B}\left(\frac{1}{10}\right)$ and $\oplus$ denotes the XOR operator. Moreover, let us consider the SCM $\M_2$ defined by $Y \sim \mathcal{B}\left(\frac{1}{2}\right)$ and $X = Y \oplus U$ where $U \sim \mathcal{B}\left(\frac{1}{10}\right)$. We have $\G_{\mathcal M_1} = X \rightarrow Y$ and $\G_{\mathcal M_2} = X \leftarrow Y$. Then, one can verify that:
        \[
        P_{\M_1}(x, y) = \frac{9}{20} \cdot \mathbf{1}\{y = x\} + \frac{1}{20} \cdot \mathbf{1}\{y \ne x\} = P_{\M_2}(x, y)
        \]
        However, $P_{\M_1}(Y = 1 \mid \Do(X = 1)) = \frac{9}{10}$ and  $P_{\M_2}(Y = 1 \mid \Do(X = 1)) = \frac{1}{2}$. 
        Therefore, $P(Y = 1 \mid \Do(X = 1))$ is not identifiable through graphs in $\C$.
        
        \item However, $P(Y = 1 \mid \Do(X = 1))$ \textbf{may be identifiable through graphs knowing $P^\star(X, Y)$ in $\C$.} Indeed, let us consider the case where $X$ and $Y$ are independent in the true data distribution $P^\star(X,Y)$. Let $\M_1$ be an SCM inducing $X \rightarrow Y$ such that $P_{\M_1}(X,Y) = P^\star(X,Y)$. We have $P_{\M_1}(Y = 1 \mid \Do(X = 1)) = P_{\M_1}(Y = 1 \mid X = 1) = P^\star(Y = 1 \mid X = 1) = P^\star(Y=1)$. Similarly, let $\M_2$ be an SCM inducing $X \leftarrow Y$ such that $P_{\M_2}(X,Y) = P^\star(X,Y)$. We have $P_{\M_2}(Y = 1 \mid do(X=1)) = P_{\M_2}(Y = 1)= P^\star(Y=1)$. Therefore the identification functional $\F \coloneqq P(X,Y) \mapsto P(Y = 1)$ is valid  in all SCM $\M$ satisfying \(\G_{\M} \in \C\), $P_M(X,Y) = P^\star(X,Y)$, and  $P(Y = 1 \mid \Do(X = 1))$ is identifiable  through graphs knowing $P^\star(X, Y)$ in $\C$.
        \end{itemize}
        Hence, in this case, IGP holds but IG does not hold.
\end{proof}

\subsection{Proofs of Section~\ref{subsubsec:ICDvsIG}}
\label{sec:proof:ICDvsIG}

\propICAnotICD*

\begin{proof}
    Consider the graph $\G_1 \coloneqq Z \leftarrow X \rightarrow Y$ and $\G_2 \coloneqq X \leftarrow Z \rightarrow Y$, and the class $\C \coloneqq \{ \G_1, \G_2 \}$. We have the following properties:
    \begin{itemize}
        \item $\{Z\}$ satisfies the adjustment criterion in both $\G_1$ and $\G_2$, hence ICA holds in $\C$ for $P(y \mid \Do(x))$.

        \item Consider the graph $\G_3 \coloneqq X \rightarrow Z \rightarrow Y$. One can verify that:
        \[
        \mathcal{D}[\G_1] \cap \mathcal{D}[\G_2] \subseteq \mathcal{D}[\G_3].
        \]

        This inclusion was established by exhaustively enumerating all edges in \(\mathcal D[\G_1]\) and \(\mathcal D[\G_2]\), using the code of \cite{yvernes2026unveiling}. This enumeration produces the complete edge sets shown in Figures~\ref{fig:dg-int} and~\ref{fig:dg-non-ia}, representing \(\mathcal D[\G_1]\cap\mathcal D[\G_2]\) and \(\mathcal D[\G_3]\), respectively. Thus, the claimed inclusion is both computationally verified and directly checkable from the complete graphs displayed in the figures.

        Assume, by contradiction, that $P(y \mid \Do(x))$ is ICD in $\C$. Then, there exists a derivation $D$ that identifies $P(y \mid \Do(x))$ in all $\G \in \C$. Hence:
        \[
        \mathcal{S}[D] \subseteq \mathcal{D}[\G_1] \cap \mathcal{D}[\G_2] \subseteq \mathcal{D}[\G_3].
        \]
        Therefore, by Theorem~\ref{thm:derivation_validity}, $D$ holds in $\G_3$, hence $P(y \mid \Do(x))$ is ICD, thus IG (by Proposition~\ref{prop:ICD_implique_IG}), in $\{\G_1, \G_2, \G_3\}$.

        However, since $\{Z\}$ does not satisfy the adjustment criterion in $\G_3$, by Theorem~\ref{th:adjustment_perkovic}, there exists a density $P^\star$ Markovian with respect to $\G_3$ such that:
        \[
        P^\star(y \mid \Do(x)) \neq \sum_{z} P^\star(y\mid z,x)P^\star(z).
        \]
        The construction of $P^\star$, given in the proof of Theorem~57 in \cite{Perkovic_2018}, yields an SCM $\mathcal{M}_3$ with $P_{\mathcal{M}_3} = P^\star$ and $\G_{\mathcal{M}_3} = \G_3$. More precisely, $P_{\mathcal{M}_3}$ is a linear gaussian SCM, hence $P^\star$ as a density with respect to Lebesgue measure. Although $P^\star$ corresponds to a continuous distribution, we use the discrete notation adopted throughout the paper for simplicity. The arguments remain unchanged when expressed in terms of densities and integrals instead of probability masses and sums.
        
        Moreover, since $P^\star$ is Markovian with respect to $\G_3$, thus by Bayes' rule we have:
        \[
            P^\star(x,y,z) = P^\star(y \mid z) P^\star(z \mid x) P^\star(x) = P^\star(y \mid z) P^\star(x \mid z) P^\star(z)
        \]
        Hence $P^\star$ is also Markovian with respect to $\G_2$, hence by Proposition~7.1 from \cite{PetJanSch17}, there exists an SCM $\mathcal{M}_2$ with $P_{\mathcal{M}_2} = P^\star$ and $\G_{\mathcal{M}_2} = \G_2$.\footnote{The fact that $P^\star$ as a density with respect to Lebesgue measure is needed here.} In $\G_2$, since $\{Z\}$ satisfies the adjustment criterion, we have:
        \[
        P^\star(y \mid \Do(x)) = \sum_{z} P^\star(y\mid z,x)P^\star(z).
        \]
        \textbf{This contradicts the previous inequality}.
        Therefore, $P(y \mid \Do(x))$ cannot be ICD in $\C$.
    \end{itemize}
    Hence, $ICA \Rightarrow ICD$ fails.
\end{proof}

\begin{figure}
\centering
\begin{tikzpicture}[
    dofree/.style = {
      draw=coolorange!60!black,
      fill=coolorange!80,
      line width=0.6pt,
      inner sep=1pt,
      font=\scriptsize
    },
    withdo/.style = {
      draw=blue!60!black,
      fill=blue!25,
      line width=0.6pt,
      inner sep=1pt,
      font=\scriptsize
    },
    styleR1/.style = {thick, -, gray, line width=1.2pt},
    styleR2/.style = {thick, -, orange!60!brown, dashed, line width=1.2pt},
    styleR3/.style = {thick, -, blue!55, dotted, line width=1.5pt},
    scale=3
 ]
\node[withdo] (n0) at (0.000, 0.475) {$\mathrm{P(y \mid \Do(x,z))}$};
\node[withdo] (n1) at (-0.475, 0.000) {$\mathrm{P(y \mid \Do(x), z)}$};
\node[withdo] (n2) at (0.475, -0.000) {$\mathrm{P(y \mid \Do(z), x)}$};
\node[dofree] (n3) at (-0.000, -0.475) {$\mathrm{P(y \mid x,z)}$};

\node[withdo] (n4) at (1.350, 0.000) {$\mathrm{P(x \mid \Do(y,z))}$};
\node[withdo] (n5) at (2.050, 0.000) {$\mathrm{P(x \mid \Do(z))}$};

\node[withdo] (n6) at (3.050, 0.000) {$\mathrm{P(x \mid \Do(y), z)}$};
\node[dofree] (n7) at (3.750, 0.000) {$\mathrm{P(x \mid z)}$};

\node[withdo] (n8) at (-0.350, -1.00) {$\mathrm{P(x \mid \Do(y))}$};
\node[dofree] (n9) at (0.350, -1.00) {$\mathrm{P(x)}$};

\node[withdo] (n10) at (1.350, -1.00) {$\mathrm{P(z \mid \Do(x,y))}$};
\node[withdo] (n11) at (2.050, -1.00) {$\mathrm{P(z \mid \Do(x))}$};

\node[withdo] (n12) at (3.050, -1.00) {$\mathrm{P(z \mid \Do(y), x)}$};
\node[dofree] (n13) at (3.750, -1.00) {$\mathrm{P(z \mid x)}$};

\node[withdo] (n14) at (-0.350, -1.75) {$\mathrm{P(z \mid \Do(y))}$};
\node[dofree] (n15) at (0.350, -1.75) {$\mathrm{P(z)}$};

\node[withdo] (n16) at (1.350, -1.75) {$\mathrm{P(x,z \mid \Do(y))}$};
\node[dofree] (n17) at (2.050, -1.75) {$\mathrm{P(x,z)}$};

\draw[styleR3] (n4) -- (n5);
\draw[styleR3] (n6) -- (n7);
\draw[styleR3] (n8) -- (n9);
\draw[styleR2] (n0) -- (n2);
\draw[styleR2] (n0) -- (n1);
\draw[styleR2] (n1) -- (n3);
\draw[styleR2] (n2) -- (n3);
\draw[styleR3] (n10) -- (n11);
\draw[styleR3] (n12) -- (n13);
\draw[styleR3] (n14) -- (n15);
\draw[styleR3] (n16) -- (n17);
\end{tikzpicture}
\caption{Intersection of the derivations graphs of $\G_1$ and $\G_2$. Blue nodes denote interventional queries, while orange nodes denote do-free queries. Edges represent applications of the do-calculus rules: gray solid lines for Rule $\mathcal{R}1$, orange dashed lines for Rule $\mathcal{R}2$, and blue dotted lines for Rule $\mathcal{R}3$. Only non-isolated queries are displayed.}
\label{fig:dg-int}
\end{figure}

\begin{figure}
\centering
\begin{tikzpicture}[
    dofree/.style = {
      draw=coolorange!60!black,
      fill=coolorange!80,
      line width=0.6pt,
      inner sep=1pt,
      font=\scriptsize
    },
    withdo/.style = {
      draw=blue!60!black,
      fill=blue!25,
      line width=0.6pt,
      inner sep=1pt,
      font=\scriptsize
    },
    styleR1/.style = {thick, -, gray, line width=1.2pt},
    styleR2/.style = {thick, -, orange!60!brown, dashed, line width=1.2pt},
    styleR3/.style = {thick, -, blue!55, dotted, line width=1.5pt},
    scale=3
 ]
\node[withdo] (n0) at (0.000, 0.600) {$\mathrm{P(y \mid \Do(x,z))}$};
\node[withdo] (n1) at (-0.000, -0.600) {$\mathrm{P(y \mid \Do(x), z)}$};
\node[withdo] (n2) at (-0.520, 0.300) {$\mathrm{P(y \mid \Do(z), x)}$};
\node[withdo] (n3) at (0.520, 0.300) {$\mathrm{P(y \mid \Do(z))}$};
\node[dofree] (n4) at (-0.520, -0.300) {$\mathrm{P(y \mid x,z)}$};
\node[dofree] (n5) at (0.520, -0.300) {$\mathrm{P(y \mid z)}$};
\node[withdo] (n6) at (1.224, 0.155) {$\mathrm{P(x \mid \Do(y,z))}$};
\node[withdo] (n7) at (1.406, -0.405) {$\mathrm{P(x \mid \Do(y))}$};
\node[withdo] (n8) at (1.700, 0.500) {$\mathrm{P(x \mid \Do(z), y)}$};
\node[withdo] (n9) at (2.176, 0.155) {$\mathrm{P(x \mid \Do(z))}$};
\node[dofree] (n10) at (1.994, -0.405) {$\mathrm{P(x)}$};
\node[withdo] (n11) at (3.400, 0.475) {$\mathrm{P(z \mid \Do(x,y))}$};
\node[withdo] (n12) at (2.925, 0.000) {$\mathrm{P(z \mid \Do(x))}$};
\node[withdo] (n13) at (3.875, -0.000) {$\mathrm{P(z \mid \Do(y), x)}$};
\node[dofree] (n14) at (3.400, -0.475) {$\mathrm{P(z \mid x)}$};
\node[withdo] (n15) at (0.000, -1.250) {$\mathrm{P(x \mid \Do(y), z)}$};
\node[dofree] (n16) at (-0.390, -1.925) {$\mathrm{P(x \mid y,z)}$};
\node[dofree] (n17) at (0.390, -1.925) {$\mathrm{P(x \mid z)}$};
\node[withdo] (n18) at (1.350, -1.700) {$\mathrm{P(y \mid \Do(x))}$};
\node[dofree] (n19) at (2.050, -1.700) {$\mathrm{P(y \mid x)}$};
\node[withdo] (n20) at (3.050, -1.700) {$\mathrm{P(z \mid \Do(x), y)}$};
\node[dofree] (n21) at (3.750, -1.700) {$\mathrm{P(z \mid x,y)}$};
\node[withdo] (n22) at (-0.350, -2.500) {$\mathrm{P(z \mid \Do(y))}$};
\node[dofree] (n23) at (0.350, -2.500) {$\mathrm{P(z)}$};
\node[withdo] (n24) at (1.350, -2.500) {$\mathrm{P(x,z \mid \Do(y))}$};
\node[dofree] (n25) at (2.050, -2.500) {$\mathrm{P(x,z)}$};
\node[withdo] (n26) at (3.050, -2.500) {$\mathrm{P(y,z \mid \Do(x))}$};
\node[dofree] (n27) at (3.750, -2.500) {$\mathrm{P(y,z \mid x)}$};
\draw[styleR2] (n6) -- (n8);
\draw[styleR3] (n6) -- (n7);
\draw[styleR3] (n6) -- (n9);
\draw[styleR2] (n15) -- (n16);
\draw[styleR3] (n15) -- (n17);
\draw[styleR3] (n7) -- (n10);
\draw[styleR1] (n8) -- (n9);
\draw[styleR1] (n16) -- (n17);
\draw[styleR3] (n9) -- (n10);
\draw[styleR2] (n0) -- (n2);
\draw[styleR2] (n0) -- (n1);
\draw[styleR3] (n0) -- (n3);
\draw[styleR2] (n1) -- (n4);
\draw[styleR3] (n1) -- (n5);
\draw[styleR2] (n18) -- (n19);
\draw[styleR1] (n2) -- (n3);
\draw[styleR2] (n2) -- (n4);
\draw[styleR1] (n4) -- (n5);
\draw[styleR2] (n3) -- (n5);
\draw[styleR2] (n11) -- (n13);
\draw[styleR3] (n11) -- (n12);
\draw[styleR2] (n20) -- (n21);
\draw[styleR2] (n12) -- (n14);
\draw[styleR3] (n13) -- (n14);
\draw[styleR3] (n22) -- (n23);
\draw[styleR3] (n24) -- (n25);
\draw[styleR2] (n26) -- (n27);
\draw[styleR3] (n6) -- (n10);
\end{tikzpicture}
\caption{Derivation graph of $\G_3$. Blue nodes denote interventional queries, while orange nodes denote do-free queries. Edges represent applications of the do-calculus rules: gray solid lines for Rule $\mathcal{R}1$, orange dashed lines for Rule $\mathcal{R}2$, and blue dotted lines for Rule $\mathcal{R}3$. Only non isolated queries are displayed.}
\label{fig:dg-non-ia}
\end{figure}

\newpage
\section{Proofs of Section~\ref{sec:discussions}}
\label{proof:further}

\lemmamonotonicity*

\begin{proof}
    Let us prove the different points:
    \begin{itemize}
        
        \item[1.]  Let $\F_1$ be an identification functional valid in $\G_1$ for \(P\bigl(\mathbf{y}\mid \Do(\mathbf{x}),\,\mathbf{z}\bigr)\). Let $\M_2$ be an SCM inducing $\G_2$. By adding missing arguments in the structural functions of $\M_2$, we can construct an SCM $\M_1$ inducing $\G_1$ such that $P_{\M_1}(\V) = P_{\M_2}(\V)$ and $P_{\M_1} \bigl(\mathbf{y} \mid \Do(\mathbf{x}), \mathbf{z}\bigr) = P_{\M_2}\bigl(\mathbf{y} \mid \Do(\mathbf{x}), \mathbf{z}\bigr)$:
        
        Let $\E_B^+ \coloneqq \E_B(\G_1) \setminus \E_B(\G_2)$ be the set of bidirected edges in $\G_1$ that are absent from $\G_2$. For each $V_i \dashleftrightarrow V_j \in \E_B^+$, introduce a fresh exogenous variable $U_{ij}$. Let $\U^+$ denote the set of these new variables and define the exogenous variables of $\M_1$ by $\U^{(1)} \coloneqq \U^{(2)} \cup \U^+$. We retain the exogenous variables and their joint distribution from $\M_2$, and take each variable in $\U^+$ independently of them and of all other added variables. Thus, $\M_1$ has the same endogenous variables $\V$ and the same original exogenous variables as $\M_2$, augmented by the new variables in $\U^+$. For each $V_i \in \V$, let $\U_i^+$ be the set of new exogenous variables corresponding to bidirected edges incident to $V_i$, and set $\U_i^{(1)} \coloneqq \U_i^{(2)} \cup \U_i^+$. 

        Since $\G_2 \subseteq \G_1$, for all $V_i \in \V$ we have $\Pa(V_i,\G_2) \subseteq \Pa(V_i,\G_1)$. For any variable $A$, let $\mathcal{X}_A$ denote its valuation space. We define the structural functions $f_i^{(1)} \in \F_{\M_1}$ for each $V_i \in \V$ as:
        \[
        \begin{aligned}
        f_i^{(1)} &: \left\{
        \begin{aligned}
        \mathcal{X}_{\Pa(V_i,\G_1)} \times \mathcal{X}_{\U_i^{(2)}} &\to \mathcal{X}_{V_i},\\
        (x_{\Pa(V_i,\G_1)},u_i^{(2)}) &\mapsto f_i^{(2)}(x_{\Pa(V_i,\G_2)},u_i^{(2)})
        \end{aligned}
        \right.
        && \text{if } \U_i^+ = \emptyset, \\
        f_i^{(1)} &: \left\{
        \begin{aligned}
        \mathcal{X}_{\Pa(V_i,\G_1)} \times \mathcal{X}_{\U_i^{(2)}} \times \mathcal{X}_{\U_i^+} &\to \mathcal{X}_{V_i},\\
        (x_{\Pa(V_i,\G_1)},u_i^{(2)},u_i^+) &\mapsto f_i^{(2)}(x_{\Pa(V_i,\G_2)},u_i^{(2)})
        \end{aligned}
        \right.
        && \text{if } \U_i^+ \neq \emptyset.
        \end{aligned}
        \]
        where $f_i^{(2)} \in \F_{\M_2}$ is the corresponding structural function in $\M_2$, and in the second case $u_i^+$ assigns values to the newly added exogenous variables in $\U_i^+$. In both cases, $f_i^{(1)}$ ignores the added directed-parent arguments in $\Pa(V_i,\G_1) \setminus \Pa(V_i,\G_2)$; in the second case, it also ignores $u_i^+$. Each new $U_{ij}$ is a shared exogenous parent of $V_i$ and $V_j$, so it induces the required dashed bidirected edge. Since all added arguments are ignored, $\M_1$ and $\M_2$ induce the same observational and interventional distributions, while $\M_1$ induces $\G_1$.

        Hence, $\F_1(P_{\M_2}(\V)) = \F_1(P_{\M_1}(\V)) =  P_{\M_1}\bigl(\mathbf{y} \mid \Do(\mathbf{x}), \mathbf{z}\bigr) = P_{\M_2} \bigl(\mathbf{y} \mid \Do(\mathbf{x}), \mathbf{z}\bigr)$. Therefore, $\F_1$ is valid in $\G_2$.
        
        \item[2.] The construction above preserves the observational distribution and the interventional distribution. Hence, if $P_{\M_2}(\V)=P^\star(\V)$, then $P_{\M_1}(\V)=P^\star(\V)$ as well. identification functional valid for \(P\bigl(\mathbf{y}\mid \Do(\mathbf{x}), \mathbf{z}\bigr)\) in all SCMs $\mathcal M_1$ inducing \(\G_1\) and satisfying \(P_{\mathcal M_1}(\mathcal{V})=P^\star(\mathcal{V})\) is also valid in all SCMs $\mathcal M_2$ inducing \(\G_2\) and satisfying \(P_{\mathcal M_2}(\mathcal{V})=P^\star(\mathcal{V})\).
    
        \item[3 \& 4.] Since $\G_2$ is a subgraph of $\G_1$, every graphical independence that holds in $\G_1$ also holds in $\G_2$. Therefore, any causal derivation holding in $\G_1$, holds in  $\G_2$ and any graphical criterion satisfied in $\G_1$ is satisfied in $\G_2$.
    \end{itemize}
\end{proof}

\end{document}